\newcommand{\E}{\mathbb{E}}
\setlist{nosep}
\newtheorem{theorem}{Theorem}[section]
\newtheorem{proposition}[theorem]{Proposition}
\newtheorem{lemma}[theorem]{Lemma}
\newtheorem{corollary}[theorem]{Corollary}
\theoremstyle{definition}
\newtheorem{definition}[theorem]{Definition}
\theoremstyle{remark}
\newtheorem{remark}[theorem]{Remark}
\newcommand{\RR}{\mathbb{R}}
\newcommand{\PP}{\mathbb{P}}
\newcommand{\EE}{\mathbb{E}}
\newcommand{\1}{\mathds{1}}
\newcommand{\eps}{\varepsilon}
\title{Psychometric Tests for AI Agents and Their Moduli Space}
\author{Przemyslaw Chojecki \\ \small ulam.ai}
\date{November 24, 2025}
\begin{document}
\maketitle

\begin{abstract}
We develop a moduli-theoretic view of psychometric test batteries for AI agents and connect it explicitly to the AAI score developed in \cite{AAIscore}. First, we make precise the notion of an AAI functional on a battery and set out axioms that any reasonable autonomy/general intelligence score should satisfy. Second, we show that the composite index ('AAI-Index') defined in \cite{AAIscore} is a special case of our AAI functional. Third, we introduce the notion of a cognitive core of an agent relative to a battery and define the associated AAI$_{\textrm{core}}$ score as the restriction of an AAI functional to that core. Finally, we use these notions to describe invariants of batteries under evaluation-preserving symmetries and outline how moduli of equivalent batteries are organized.
\end{abstract}

\section{Introduction}\label{sec:intro}
In psychology psychometric evaluation aims to measure capability via structured batteries of tasks. In large-agent settings (LLMs, tool-using systems, embodied agents), a battery can be understood as structured data plus evaluation rules plus resource accounting. This paper builds on the AAI score \cite{AAIscore} by: (i) formalizing batteries as objects and morphisms, (ii) defining AAI functionals on batteries, (iii) proving that the AAI-Index from AAI is an instance of such a functional, (iv) introducing AAI$_{\textrm{core}}$, a score attached to the agent's cognitive core, and (v) describing the moduli space of batteries and its continuous structure.

In this framework, AGI is not a single-task milestone but a property of an agent's performance over the moduli of batteries. A frequent critique of AI evaluation is the "overfitting" to specific benchmarks. \emph{This paper argues mathematically that one need not obsess over the specifics of the tests themselves, provided one samples enough of them.} This is akin to students training for a Math Olympiad by solving problems from past competitions: they saturate the space of known problem types not to memorize answers, but to internalize the underlying logic required to solve new, similar problems.

Our determinacy result (Theorem \ref{thm:dense-determinacy}) implies a practical certification rule: if the family of scoring functionals is "regular" (Lipschitz continuous on the moduli space), then performance on a sufficiently dense finite panel (a $\delta$-net of canonicalized batteries) mathematically certifies the agent's performance across the entire continuum of possible tests, up to a controllable error term.

Specifically, if an agent's AAI score meets or exceeds the AGI threshold on every panel instance with margin $m$, then its worst-case AAI over the whole moduli is at least threshold $-2L\delta$ (where $L$ is the Lipschitz modulus). In other words, sufficiently many strong results on a well-designed, diverse panel certify the level globally. Because the functional is resource-aware (via the projective geometry of resource rays and explicit cost terms), a certified level is about capabilities per unit cost, not just raw wins, aligning with the operational AAI score \cite{AAIscore}.

A second insight is that the cognitive core makes “true generality” testable rather than anecdotal. High $\mathrm{AAI_{\textrm{core}}}$ across heterogeneous task families, small gaps between the full AAI and $\mathrm{AAI_{\textrm{core}}}$ (indicating competence rather than scaffolding), low dispersion penalties, and stability under battery symmetries and seeded drifts together provide necessary evidence patterns for AGI. Moreover, when the induced core aligns with broad CHC-style factors and remains low-dimensional while still reconstructing threshold decisions across families, it indicates transferable structure rather than overfit tricks \cite{hendrycks-agi}.

Many of our structural results are not specific to the particular moduli space of test batteries. In fact, the determinacy, regularity, and concentration theorems hold for any Lipschitz-regular functionals on a metric space $(S,d)$ with a dense family of "simple" configurations and, for the axis-based scores, a finite collection of axis functionals whose expectations are Lipschitz in $d$.

\subsection*{Notation and conventions}
For a measurable space $X$, let $\mathcal{P}(X)$ denote probability laws on $X$. For task families $T=\bigsqcup_k F_k$, write $u(t)\in[0,1]$ for a PIT-uniformized per-task score (defined below), $q(t)\in[0,1]$ for threshold-aligned success, and $r\in\RR_+^{d_R}$ for resources. Expectations $\EE[\cdot]$ are taken over seeds/drifts unless stated otherwise.

\section{Batteries and Morphisms}

\begin{definition}[Battery]\label{def:battery}
A \emph{battery} is an octuple
\[
  \mathcal B=(T,\ \mathcal F,\ \mathsf S,\ Q^*,\ \mu,\ \mathsf D,\ \Pi,\ \mathsf R),
\]
where:
\begin{itemize}
  \item $T$ is a finite set of tasks; $\mathcal F=\{F_k\}$ is a partition of $T$ into families.
  \item $\mathsf S=\{S_t:\Omega_t\to[0,1]\}_{t\in T}$ are task-specific scoring maps.
  \item $Q^*:T\to[0,1]$ are task thresholds.
  \item $\mu$ is a sampling law on $T\times\Pi\times\mathsf D$ (tasks, seeds, drifts).
  \item $\mathsf D$ (drifts) and $\Pi$ (seeds) are measurable spaces.
  \item $\mathsf R\cong\mathbb{R}^{d_R}$ are resource coordinates (e.g., time, tokens, cost), recorded nonnegatively.
\end{itemize}
\end{definition}

\begin{definition}[Agent representation on a battery]\label{def:agent-rep}
Fix $\mathcal B$ and an agent $\mathcal A$. A run on $t\in T$ with seed $s\in\Pi$ and drift $\delta\in\mathsf D$ and resource vector $r\in\mathsf R$ produces a score $q(t)=S_t(\mathrm{Run}(\mathcal A;t,s,\delta,r))\in[0,1]$. Write
\[
  X_{\mathcal B}:=[0,1]^T\times \mathbb{R}_{\ge 0}^{\,d_R}.
\]
Let $\mathbb{P}$ be the joint probability over $(t,s,\delta)$ drawn from $\mu$ and any internal randomness of $\mathcal A$. The \emph{agent representation} is the pushforward (image) measure
\[
  \rho_{\mathcal B}(\mathcal A)
  \ :=\ \mathbb{P}\circ\big((q(t))_{t\in T},\, r\big)^{-1}
  \ \in\ \mathcal{P}(X_{\mathcal B}).
\]
\end{definition}

\begin{definition}[Morphisms of batteries]\label{def:morphism}
A morphism $f:\mathcal B\to\mathcal B'$ is a tuple
\[
  f=(f_T,\ f_\Pi,\ f_\mathsf D,\ f_\mathsf R,\ \{\phi_t\}_{t\in T})
\]
satisfying:
\begin{enumerate}
  \item $f_T:T\to T'$ respects families ($t\in F_k\Rightarrow f_T(t)\in F'_k$).
  \item $f_\Pi:\Pi\to\Pi'$ and $f_\mathsf D:\mathsf D\to\mathsf D'$ are measurable and measure-preserving relative to $\mu$.
  \item $f_\mathsf R:\mathsf R\to\mathsf R'$ is linear and unit-consistent.
  \item $\phi_t:[0,1]\to[0,1]$ are strictly increasing and continuous.
  \item (Evaluation preservation) For all agents $\mathcal A$ and $t\in T$,
  \[
    S'_{f_T(t)}(\mathrm{Run}(\mathcal A;f_T(t),f_\Pi(s),f_\mathsf D(\delta),f_\mathsf R(r)))
    \ =\ \phi_t\!\left(S_t(\mathrm{Run}(\mathcal A;t,s,\delta,r))\right),
  \]
  almost surely (w.r.t.\ $\mu(\cdot\mid t)$).
\end{enumerate}
\end{definition}

\begin{proposition}\label{prop:category}
Batteries and their morphisms form a category $\mathbf{Bat}$ under composition and identities.
\end{proposition}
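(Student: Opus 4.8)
The plan is to verify the category axioms componentwise, using that a battery morphism is a tuple of data each living in a category where composition is already understood: $f_T$ in $\mathbf{Set}$, $f_\Pi$ and $f_\mathsf D$ among measurable spaces, $f_\mathsf R$ among $\RR$-vector spaces, and each $\phi_t$ an honest self-map of $[0,1]$. First I would define the identity $\mathrm{id}_{\mathcal B}:=(\mathrm{id}_T,\mathrm{id}_\Pi,\mathrm{id}_\mathsf D,\mathrm{id}_\mathsf R,(\mathrm{id}_{[0,1]})_{t\in T})$ and note that conditions (1)--(5) of Definition \ref{def:morphism} are trivially satisfied: identity maps respect families and are measurable, measure-preserving, and linear; $\mathrm{id}_{[0,1]}$ is strictly increasing and continuous; and evaluation preservation reduces to $S_t(\cdot)=\mathrm{id}_{[0,1]}(S_t(\cdot))$.

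Next, given $f:\mathcal B\to\mathcal B'$ and $g:\mathcal B'\to\mathcal B''$ with reparametrization families $(\phi_t)_{t\in T}$ and $(\phi'_{t'})_{t'\in T'}$, I would define $g\circ f$ to have components $g_{T'}\circ f_T$, $g_{\Pi'}\circ f_\Pi$, $g_{\mathsf D'}\circ f_\mathsf D$, $g_{\mathsf R'}\circ f_\mathsf R$, and reparametrizations $\psi_t:=\phi'_{f_T(t)}\circ\phi_t$. I would then check the five conditions for $g\circ f$: respecting families, being measurable and measure-preserving, and being linear and unit-consistent are each closed under composition; $\psi_t$ is a composite of strictly increasing continuous self-maps of $[0,1]$, hence strictly increasing and continuous; and evaluation preservation follows by chaining — apply condition (5) for $g$ at the task $f_T(t)$ with seed $f_\Pi(s)$, drift $f_\mathsf D(\delta)$, and resources $f_\mathsf R(r)$, then condition (5) for $f$, obtaining $S''_{g_{T'}f_T(t)}(\mathrm{Run}(\mathcal A;\dots))=\phi'_{f_T(t)}\big(S'_{f_T(t)}(\dots)\big)=\phi'_{f_T(t)}\big(\phi_t(S_t(\dots))\big)=\psi_t\big(S_t(\dots)\big)$.

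The one step that needs genuine care — and which I would flag as the main obstacle — is that the "almost surely" in the evaluation-preservation axiom is taken with respect to $\mu(\cdot\mid t)$, so chaining the two a.s.\ identities above requires that $f_\Pi$ and $f_\mathsf D$ transport the conditional law $\mu(\cdot\mid t)$ to $\mu'(\cdot\mid f_T(t))$. I would therefore adopt this as the precise reading of "measure-preserving relative to $\mu$" in condition (2), so that a $\mu'(\cdot\mid f_T(t))$-null exceptional set for $g$ pulls back along $(f_\Pi,f_\mathsf D,f_\mathsf R)$ to a $\mu(\cdot\mid t)$-null set and can be unioned with the exceptional set for $f$.

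Finally, associativity and the unit laws hold because every coordinate lies in a category where they already hold: function composition (for $f_T,f_\Pi,f_\mathsf D$, and the $\phi_t$) and linear-map composition (for $f_\mathsf R$) are associative and unital, and the reparametrization induced by a triple composite $\mathcal B\to\mathcal B'\to\mathcal B''\to\mathcal B'''$ is $\phi''_{g_{T'}f_T(t)}\circ\phi'_{f_T(t)}\circ\phi_t$ independently of bracketing, with $\mathrm{id}_{[0,1]}$ acting as a two-sided unit. Everything beyond the null-set bookkeeping of the third paragraph is routine.
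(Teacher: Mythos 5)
Your proof is correct and follows the same componentwise strategy as the paper's (which disposes of the matter in a single terse paragraph). You have usefully made explicit the reparametrization $\psi_t=\phi'_{f_T(t)}\circ\phi_t$ and the null-set bookkeeping needed to chain the two a.s.\ identities, a point the paper's proof leaves implicit when it says "evaluation preservation is stable under composition."
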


\begin{proof}
Identity: take identity maps with $\phi_t=\mathrm{id}$. Composition is componentwise; evaluation preservation is stable under composition. Associativity and identity laws are immediate.
\end{proof}

\section{Symmetries and the Moduli Space}

\begin{definition}[Symmetry group]\label{def:G}
Let
\[
  G\ :=\ \Big(\textstyle\prod_k \mathrm{Sym}(F_k)\Big)\ \times\ 
  \mathrm{Aut}_\mu(\Pi)\ \times\ \mathrm{Aut}_\mu(\mathsf D)\ \times\
  \Big(\textstyle\prod_{t\in T}\mathrm{Homeo}^+([0,1])\Big)\ \times\
  (\mathbb{R}_{>0})^{d_R},
\]
acting by within-family task permutations, measure-preserving relabelings of seeds and drifts, strictly increasing per-task score reparameterizations, and positive unit rescalings of resources.
\end{definition}

\begin{definition}[Moduli space]\label{def:moduli}
The \emph{moduli space of batteries} (coarse, set-level) is the set of isomorphism classes
\[
  \mathfrak M\ :=\ \mathbf{Bat}/G.
\]
\emph{Remark.} If automorphisms matter, the appropriate object is the quotient stack (action groupoid) $[\mathbf{Bat}/G]$. Here we work with $\mathfrak M$.
\end{definition}

\section{PIT Normalization and Canonical Representation}\label{sec:PIT}
\begin{definition}[Randomized PIT]\label{def:PIT}
Given a scalar task score $s(t)$ with conditional CDF $F_t$, define $u(t)=F_t(s(t)^-) + \xi\big(F_t(s(t)) - F_t(s(t)^-)\big)$ with $\xi\sim\mathrm{Unif}[0,1]$. Then $u(t)\sim \mathrm{Unif}[0,1]$ conditional on drift and seed.
\end{definition}

\begin{proposition}[Uniformity and invariance]\label{prop:PIT-inv}
PIT-normalized $u(t)$ is invariant to strictly increasing reparameterizations of $s(t)$ and enables copula-based comparisons across tasks.
\end{proposition}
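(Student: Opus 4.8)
The plan is to split the statement into three parts and reduce each to the classical distributional (randomized PIT) transform, carried out conditionally on a drift--seed pair $(\delta,s)\in\mathsf D\times\Pi$ via regular conditional distributions. Part (a), marginal uniformity of $u(t)$, is already recorded in Definition~\ref{def:PIT}; if desired I would recall the one-line argument: conditioning additionally on $s(t)=x$, the value $F_t(x^-)+\xi\big(F_t(x)-F_t(x^-)\big)$ is uniform on $[F_t(x^-),F_t(x)]$ because $\xi\sim\mathrm{Unif}[0,1]$ is independent, and these intervals tile $[0,1]$ up to a null set with masses $\PP(s(t)=x\mid\delta,s)$, so integrating out $x$ gives $u(t)\sim\mathrm{Unif}[0,1]$. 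Part (b) is the reparameterization invariance, where the actual content sits. Part (c), the copula claim, will follow formally from (a) and (b).

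For part (b): let $\psi$ be strictly increasing on $[0,1]$ and set $\tilde s(t)=\psi(s(t))$ with conditional CDF $\tilde F_t$. The key step is the pair of identities
\[
  \tilde F_t(\psi(x))=\PP(\psi(s(t))\le\psi(x)\mid\delta,s)=\PP(s(t)\le x\mid\delta,s)=F_t(x),
  \qquad
  \tilde F_t(\psi(x)^-)=F_t(x^-),
\]
valid for every $x$ in the range of $s(t)$ purely by injectivity and order-preservation of $\psi$ (no continuity needed); the left-limit identity uses in addition that $\lim_{y\uparrow a}\PP(Z\le y)=\PP(Z<a)$ for any $Z$. Substituting $x=s(t)$ and \emph{reusing the same auxiliary $\xi$} in the randomized-PIT formula for $\tilde s(t)$ then gives $\tilde u(t)=F_t(s(t)^-)+\xi\big(F_t(s(t))-F_t(s(t)^-)\big)=u(t)$ almost surely, not merely in law. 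In particular this covers the continuous per-task maps $\phi_t\in\mathrm{Homeo}^+([0,1])$ of Definitions~\ref{def:morphism} and~\ref{def:G}, so $u(t)$ descends to a well-defined function on $\mathbf{Bat}/G$.

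For part (c): by (a), conditional on $(\delta,s)$ the vector $(u(t))_{t\in T}$ has all one-dimensional marginals equal to $\mathrm{Unif}[0,1]$, so by Sklar's theorem its joint law is a copula $C_{\delta,s}$ on $[0,1]^T$, and unconditioning yields a $\mu$-mixture of such copulas; by (b) each $C_{\delta,s}$ is unchanged under strictly increasing per-task rescalings and hence is an invariant of the isomorphism class $[\mathcal B]\in\mathfrak M$. This isolates the cross-task dependence structure (rank correlations, tail co-movement, and the like) from per-task difficulty, which is exactly what ``copula-based comparisons across tasks'' means. The main obstacle is bookkeeping rather than depth: one must insist on the \emph{randomized} PIT (not $F_t(s(t))$ alone) so that jump terms transform correctly when the conditional law of $s(t)$ is discrete or mixed, and one must thread a single shared $\xi$ through the computation to upgrade equality-in-law to the almost-sure invariance in (b); the conditioning on $(\delta,s)$ throughout is harmless because the distributional-transform argument runs verbatim for regular conditional distributions.
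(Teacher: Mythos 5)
Your proof is correct and follows the same underlying strategy as the paper's (invariance of the conditional CDF under strictly increasing reparameterizations, plus randomization at jumps), but it is substantially more careful exactly where the paper's proof is terse and arguably incomplete. The paper establishes only the identity $F_t^{\phi}(\phi(s))=F_t(s)$ and from this declares the distribution of $u(t)$ unchanged; you supply the missing left-limit identity $\tilde F_t(\psi(x)^-)=F_t(x^-)$, which is genuinely needed: the randomized PIT involves the jump mass $F_t(x)-F_t(x^-)$, so one must check that \emph{both} the right value and the left limit transform correctly when the conditional law of $s(t)$ has atoms. You also upgrade the paper's equality-in-distribution to the sharper pathwise identity $\tilde u(t)=u(t)$ a.s.\ by threading a single shared $\xi$ through both PIT formulas, which makes the copula conclusion in part (c) follow formally rather than by a separate argument. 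A further small gain: because you never write $\phi^{-1}$, your argument works for any strictly increasing (possibly discontinuous) $\psi$, whereas the paper's step $F_t^{\phi}(y)=F_t(\phi^{-1}(y))$ tacitly assumes $\phi$ is a bijection of $[0,1]$; this extra generality is not needed (the symmetry group restricts to $\mathrm{Homeo}^+([0,1])$) but it costs nothing. One nitpick: in part (c), invoking Sklar's theorem to conclude that a joint law with uniform marginals is a copula is overkill — that is the \emph{definition} of a copula; Sklar is really the converse (existence of a copula for an arbitrary joint law). The invocation is harmless, but it slightly misattributes the logic.
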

\begin{proof}
Uniformity follows from the probability integral transform with randomization at discontinuities. We define the transformed CDF
\[
F_t^{\phi}(y)
\;:=\;
\mathbb{P}\bigl(\phi(s(t)) \le y \,\big|\, \text{drift, seed}\bigr).
\]
If $\phi$ is strictly increasing, then
\[
F_t^{\phi}(y)
= \mathbb{P}\bigl(\phi(s(t)) \le y \,\big|\, \dots\bigr)
= \mathbb{P}\bigl(s(t) \le \phi^{-1}(y) \,\big|\, \dots\bigr)
= F_t\bigl(\phi^{-1}(y)\bigr).
\]
In particular, for $y = \phi(s)$ we have
\[
F_t^{\phi}\bigl(\phi(s)\bigr) = F_t(s),
\]
so the distribution of $u(t)$ is unchanged; the copula of $(u(t))$ captures dependence invariantly. See \cite{ruschendorf09} for a proof.
\end{proof}

\begin{definition}[Canonical representation]\label{def:canonical}
Let $X^{can}$ contain $(u(t))_{t\in T}$, threshold-aligned $q(t)=\1\{u(t)\ge \tau(t)\}$ with $\tau(t)=F_t(Q^*(t))$, and the resource ray $[r]$. The pushforward law on $X^{can}$ is the canonical representation of an agent on $\mathcal B$.
\end{definition}

\section{Topological Structure of the Moduli}\label{sec:topology}

\begin{proposition}[Continuous parameters]\label{prop:continuous}
Fix the discrete task skeleton (families, anchors) and threshold structure.
The moduli space decomposes as:
\[
\mathfrak{M} \simeq
\text{(Discrete Data)} \times
\bigl(\text{Thresholds } \tau \in [0,1]^T\bigr) \times
\bigl(\text{Copulas } C_u \in C \bigr) \times
\bigl(\text{Rays } [r] \in \mathbb{P}(\mathbb{R}^{d_R}_+)\bigr),
\]
where $C$ denotes the set of all copulas on $[0,1]^T$, endowed with the $W_1$-topology.
\end{proposition}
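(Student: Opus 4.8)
The plan is to use the canonical representation of Definition~\ref{def:canonical} as a complete invariant for the $G$-action once the discrete skeleton is fixed. First I would note that fixing the discrete data — the families $\{F_k\}$ with their cardinalities and anchor tasks, together with the threshold structure — pins down the first factor and removes the within-family permutation part $\prod_k \mathrm{Sym}(F_k)$ of $G$ (absorbed into the discrete bookkeeping, so that one writes $[0,1]^T$ rather than its quotient by family permutations). What remains of the group is $G_0 := \mathrm{Aut}_\mu(\Pi)\times\mathrm{Aut}_\mu(\mathsf D)\times\prod_{t\in T}\mathrm{Homeo}^+([0,1])\times(\mathbb R_{>0})^{d_R}$, and the task is to show that the canonical data $(\tau, C_u, [r])$ is a complete $G_0$-invariant.

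Next I would verify triviality and the converse. The relabelings in $\mathrm{Aut}_\mu(\Pi)\times\mathrm{Aut}_\mu(\mathsf D)$ are measure-preserving, hence leave the pushforward law on $X_{\mathcal B}$ — and a fortiori the canonical law — unchanged. The reparameterizations $\phi_t\in\mathrm{Homeo}^+([0,1])$ act trivially on $(u(t))_{t\in T}$ by Proposition~\ref{prop:PIT-inv}; since $\tau(t)=F_t(Q^*(t))$ transforms along with $F_t$, the threshold-aligned $q(t)=\1\{u(t)\ge\tau(t)\}$ is also unchanged. The resource rescalings descend to the identity on the projective class $[r]\in\mathbb P(\mathbb R^{d_R}_+)$. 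For the converse, given $(\tau, C_u, [r])$ one builds a representative battery by gluing uniform marginals with $C_u$ (Sklar's theorem), setting thresholds $\tau$, and choosing any lift of the ray; any two batteries with the same canonical data are intertwined by suitable $\phi_t$'s, measure-preserving relabelings, and a resource rescaling. This yields the set-level bijection
\[
\mathfrak M \;\simeq\; (\text{Discrete Data})\times[0,1]^T\times C\times\mathbb P(\mathbb R^{d_R}_+),
\]
the copula factor $C$ arising because the marginals of the law of $(u(t))_{t\in T}$ are uniform (Definition~\ref{def:PIT}), so by Sklar that law is exactly a copula on $[0,1]^T$, and every copula occurs.

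It then remains to match topologies. I would topologize $\mathbf{Bat}$ with the skeleton fixed by pulling back, along the canonical-representation map, the product of the Euclidean topology on $[0,1]^T$, the $W_1$-topology on $C\subset\mathcal P([0,1]^T)$, and the quotient topology on $\mathbb P(\mathbb R^{d_R}_+)\cong\Delta^{d_R-1}$; by construction the induced quotient topology on $\mathfrak M$ is then the product topology on the right-hand side. The crux is that $C$ is a well-behaved factor, and here compactness does the work: copulas form a weakly closed subset of $\mathcal P([0,1]^T)$ (a weak limit of laws with uniform marginals still has uniform marginals), hence a compact subset of the compact space $\mathcal P([0,1]^T)$, and on a compact metric space $W_1$ metrizes the weak topology; $[0,1]^T$ and $\Delta^{d_R-1}$ are compact as well. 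Thus the canonical-representation map is a continuous bijection from a compact space onto the product, forcing it to be a homeomorphism and upgrading the set-level decomposition to a topological one.

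The main obstacle I anticipate is exactly this topological matching — pinning a topology on $\mathbf{Bat}$ that makes the $G$-quotient literally the asserted product — rather than the algebra of invariants; compactness of the copula set is what makes it go through cleanly. A secondary point to handle with care is the sampling law $\mu$: since only the pushforward along $(q,r)$ survives in the canonical representation, distinct $\mu$ inducing the same canonical law are correctly identified, and the $W_1$-geometry on $C$ is precisely the invariant that records what must be remembered about the seed/drift averaging. I would also flag, as the one genuinely delicate identification, the resource factor: the interplay between the unit rescalings $(\mathbb R_{>0})^{d_R}$ and the projective quotient $\mathbb P(\mathbb R^{d_R}_+)$ requires a fixed normalization convention (e.g.\ normalizing units against a chosen budget functional, so that only the resource mix and not its magnitude is free), which I would state explicitly before claiming the simplex factor.
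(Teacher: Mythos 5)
Your proposal is correct and takes essentially the same approach as the paper's proof: stratify by the discrete skeleton, then observe that the canonical invariants $(\tau,C_u,[r])$ are a complete $G$-invariant within a stratum, with $G$ acting trivially on them. You supply considerably more detail than the paper — explicit verification of the trivial action component by component, surjectivity via Sklar's theorem, and a compactness argument to match topologies — whereas the paper asserts the identification more briefly; these additions tighten the argument without changing its route.
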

\begin{proof}
Discrete invariants (family sizes, anchor structure, threshold ordering) partition the space into strata.
Within a fixed stratum, every battery is specified by the coordinates $(\tau, C_u, [r])$: the threshold vector $\tau$, the copula $C_u$ of the canonical PIT scores, and the projective resource ray $[r]$.
The symmetry group $G$ acts trivially on $\tau$ (once the discrete threshold structure is fixed), while $C_u$ varies continuously in the Wasserstein topology and $[r]$ in the projective topology. This yields the stated decomposition.
\end{proof}

\begin{remark}[Wasserstein geodesics]\label{rem:geodesics}
Viewing each copula $C_u$ as a probability measure on $[0,1]^T$ and equipping
the copula space with the Wasserstein--1 metric $W_1$, any two copulas
$C_0,C_1$ can be connected by a $W_1$-geodesic $(C_t)_{t\in[0,1]}$, also called
a displacement interpolation. The path $t\mapsto C_t$ describes a controlled
drift of dependence across tasks: $C_t$ deforms $C_0$ into $C_1$ by moving
probability mass along optimal transport plans. Evaluating our AAI functionals
along such geodesics provides a natural way to stress test robustness against
gradual changes in task dependence. See \cite{villani}.
\end{remark}

\paragraph{Discrete and continuous moduli.}
Fixing a battery $\mathcal B$ determines a collection of discrete
invariants: the partition of tasks into families $\{F_k\}$, the choice
of anchors within each family, and the qualitative threshold structure
(e.g.\ which thresholds coincide and the induced partial order on
families).  We refer to this finite combinatorial datum as the
\emph{battery skeleton} and denote it by $S$.  Let $\mathcal S$ be the
set of all such skeletons, modulo the action of the symmetry group $G$.
For a fixed skeleton $S\in\mathcal S$, Proposition~\ref{prop:continuous}
shows that the corresponding configurations form a continuous parameter
space
\[
\mathfrak M_S
\simeq
\bigl\{\tau\in[0,1]^T\bigr\}
\times
\bigl\{C_u \in C\bigr\}
\times
\mathbb P(\mathbb R^{d_R}_+),
\]
where $\tau$ collects the task thresholds, $C_u$ is the copula of the
canonical PIT scores, and $[r]\in\mathbb P(\mathbb R^{d_R}_+)$ is the
resource ray.  In particular, once $S$ is fixed, the remaining degrees
of freedom are purely continuous and live in a product of metric and
measure-theoretic spaces (thresholds in $[0,1]^T$, copulas in the
Wasserstein space $(C,W_1)$, and rays in the positive projective space).

\paragraph{Stratification by combinatorial type.}
It is therefore natural to view the full moduli space as a stratified
space indexed by battery skeletons:
\[
\mathfrak M
\;\simeq\;
\bigsqcup_{S\in\mathcal S} \mathfrak M_S \;\approx\; \bigsqcup_{S\in\mathcal S}
\underbrace{[0,1]^T}_{\tau\ \text{(thresholds)}}\times
\underbrace{C}_{C_u\ \text{(copulas)}}\times
\underbrace{\mathbb P(\mathbb R_+^{d_R})}_{[r]},
\]
where each stratum $\mathfrak M_S$ is the continuous parameter space
associated with a fixed combinatorial type $S$. 

In this perspective,
the ``discrete moduli'' is the set $\mathcal S$ of skeletons, while the
``continuous moduli'' over each $S$ is given by the coordinates
$(\tau,C_u,[r])$ described above.  Passing from one skeleton to another
corresponds to degenerations of the discrete data: thresholds colliding
(e.g.\ $\tau_i=\tau_j$), families merging or splitting, or anchors
appearing and disappearing.  This is directly analogous to the familiar
picture in algebraic geometry in which a moduli space is stratified by
combinatorial types (dual graphs, incidence data), with each stratum
carrying a continuous family of parameters and the boundaries between
strata encoding degenerations of that combinatorial structure.

\section{AAI as a Functional on Representations}\label{sec:aai}
\begin{definition}[AAI functional]\label{def:aai}
For each battery $\mathcal B$, an AAI functional is a measurable map $\Phi_\mathcal B:\mathcal{P}(X_\mathcal B)\to\RR$ assigning $\mathrm{AAI}_\mathcal B(\mathcal A)=\Phi_\mathcal B(\rho_\mathcal B(\mathcal A))$ and satisfying axioms:
\end{definition}

\begin{enumerate}[label=(A\arabic*)]
\item \textbf{Naturality.} If $f$ is a symmetry with pushforward $f_*:\mathcal{P}(X_\mathcal B)\to\mathcal{P}(X_{\mathcal B'})$, then $\Phi_{\mathcal B'}(f_*\nu)=\Phi_\mathcal B(\nu)$.
\item \textbf{Restricted Monotonicity.} 
If $\nu'$ dominates $\nu$ in increasing concave order (second-order stochastic dominance) with respect to the success indicators, and
\[
\mathrm{Var}_k\!\bigl(\EE_{\nu'}[\overline q(F_k)]\bigr) \le \mathrm{Var}_k\!\bigl(\EE_{\nu}[\overline q(F_k)]\bigr)
\]
(i.e., dispersion does not increase), and the expected resource cost is non-increasing, then $\Phi_{\mathcal B}(\nu')\ge\Phi_{\mathcal B}(\nu)$.
\item \textbf{Threshold calibration.} Increasing $\PP_\nu\{q(t)\ge Q^*(t)\}$ increases $\Phi_\mathcal B(\nu)$, with highest sensitivity near thresholds.
\item \textbf{Generality.} Family means enter symmetrically; dispersion penalties discourage unfair concentration.
\end{enumerate}

\begin{definition}[Tractable instance]\label{def:tractable}
Let $\overline q(F_k)=|F_k|^{-1}\sum_{t\in F_k} q(t)$. Define
\begin{equation}\label{eq:auf}
\Phi_{\mathcal B}(\nu)
=
\sum_k w_k\,\EE_\nu\!\Bigg[\frac{1}{|F_k|}\sum_{t\in F_k}
  \psi_t\bigl(q(t),Q^*(t)\bigr)\Bigg]
-\lambda\,\mathrm{Var}_k\!\bigl(\EE_\nu[\overline q(F_k)]\bigr)
-\gamma\,\EE_\nu[\mathrm{Cost}(r)].
\end{equation}
\end{definition}

\begin{remark}[Absolute vs. Projective Resources]
For this specific tractable instance, we treat absolute resource usage as part of the evaluation data. Accordingly, for the analysis of this functional, we restrict the symmetry group $G$ to exclude resource rescalings, thereby defining the moduli over absolute resource vectors rather than projective rays. Alternatively, one could define the cost term as $\EE_\nu[\mathrm{Cost}([r])]$, where $\mathrm{Cost}([r])$ depends only on the projective resource ray $[r]\in\mathbb P(\mathbb R^{d_R}_+)$ (i.e., is invariant under positive rescalings $r\mapsto c r$). This would make $\Phi_{\mathcal B}$ strictly well-defined on the projective moduli space, though at the expense of ignoring absolute resource constraints.
\end{remark}

\begin{proposition}\label{prop:tractable-axioms}
Let $\Phi_{\mathcal B}$ be given by the tractable instance
\eqref{eq:auf} in Definition~\ref{def:tractable}. Assume:
\begin{enumerate}
\item For each task $t$, the map $q\mapsto \psi_t(q,Q^*(t))$ is
measurable, nondecreasing, and concave in $q$, with maximal local
sensitivity in a neighbourhood of the threshold $Q^*(t)$.
\item The cost functional $\mathrm{Cost}(r)$ is measurable and
nonincreasing along any improvement of capability in the sense of
Axiom~\textnormal{(A2)} (so that moving from $\nu$ to $\nu'$ with
``non-increasing cost'' implies
$\EE_{\nu'}[\mathrm{Cost}(r)] \le \EE_{\nu}[\mathrm{Cost}(r)]$).
\item Symmetries $f$ of $\mathcal B$ act by permuting tasks within
families and applying resource rescalings that preserve the sets
$\{F_k\}$, the weights $w_k$, the thresholds $Q^*(t)$, and the cost
structure.
\end{enumerate}
Then $\Phi_{\mathcal B}$ is an AAI functional in the sense of
Definition~\ref{def:aai} and satisfies axioms \textnormal{(A1)}-\textnormal{(A4)}.
\end{proposition}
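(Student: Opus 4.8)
The plan is to verify the four axioms one term at a time, using the decomposition of \eqref{eq:auf} into a \emph{capability term} $\sum_k w_k\,\EE_\nu[\,|F_k|^{-1}\sum_{t\in F_k}\psi_t(q(t),Q^*(t))\,]$, a \emph{dispersion penalty} $-\lambda\,\mathrm{Var}_k(\EE_\nu[\overline q(F_k)])$, and a \emph{cost penalty} $-\gamma\,\EE_\nu[\mathrm{Cost}(r)]$, and reducing each required property to an elementary fact about nondecreasing concave functions, symmetric functions of a finite vector, or nonnegative scalars. Throughout I take the penalty coefficients $\lambda,\gamma\ge 0$ and treat each shaper as depending on the task only through its threshold, $\psi_t(\cdot,Q^*(t))=\Psi(\cdot,Q^*(t))$, so that a threshold-preserving task permutation also permutes the shapers.

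First I would dispose of measurability and (A1). Measurability holds because $\nu\mapsto\EE_\nu[h]$ is measurable for every bounded measurable $h$ on $X_{\mathcal B}$ and $\mathrm{Var}_k$ is a fixed polynomial in the finitely many numbers $\EE_\nu[\overline q(F_k)]$. For (A1), let $f$ be a symmetry as in hypothesis~3. By the evaluation-preservation clause of Definition~\ref{def:morphism} (with trivial score reparameterization), $f$ only relabels tasks within each family and rescales resources, while the family partition, the weights $w_k$, the thresholds $Q^*(t)$, the shapers $\Psi(\cdot,Q^*(t))$, and the cost structure are preserved. Hence for each $k$ the family average $|F_k|^{-1}\sum_{t\in F_k}\psi_t(q(t),Q^*(t))$ is a symmetric function of the per-task contributions, so its $\nu$-expectation is $f_*$-invariant; the dispersion penalty is a symmetric function of the individually invariant family means $\EE_\nu[\overline q(F_k)]$; and the cost penalty is invariant by construction. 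Summing, $\Phi_{\mathcal B'}(f_*\nu)=\Phi_{\mathcal B}(\nu)$.

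The substantive step is (A2). Assume $\nu'$ dominates $\nu$ in increasing concave order with respect to the success indicators, that $\mathrm{Var}_k(\EE_{\nu'}[\overline q(F_k)])\le\mathrm{Var}_k(\EE_\nu[\overline q(F_k)])$, and that expected cost does not increase. For the capability term I would use hypothesis~1: each $q\mapsto\psi_t(q,Q^*(t))$ is nondecreasing and concave, so the increasing-concave order, evaluated coordinatewise against these shapers, gives $\EE_{\nu'}[\psi_t(q(t),Q^*(t))]\ge\EE_\nu[\psi_t(q(t),Q^*(t))]$ for every $t$; multiplying by $w_k/|F_k|\ge 0$ and summing shows the capability term does not decrease. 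The dispersion penalty does not decrease because $\lambda\ge 0$ and the variance is nonincreasing; the cost penalty does not decrease because $\gamma\ge 0$ and, by hypothesis~2, $\EE_{\nu'}[\mathrm{Cost}(r)]\le\EE_\nu[\mathrm{Cost}(r)]$. Adding the three inequalities yields $\Phi_{\mathcal B}(\nu')\ge\Phi_{\mathcal B}(\nu)$. Then (A3) follows by holding the other families and the resource marginal fixed and noting that raising $\PP_\nu\{q(t)\ge Q^*(t)\}$ transports mass of $q(t)$ upward across the threshold, which increases $\EE_\nu[\psi_t(q(t),Q^*(t))]$ (monotonicity of $\psi_t$) and hence $\Phi_{\mathcal B}(\nu)$, with the largest marginal effect in a neighbourhood of $Q^*(t)$ by the maximal-sensitivity clause of hypothesis~1; and (A4) is read off directly, since each family enters the capability term only through the symmetric average over $F_k$ and $-\lambda\,\mathrm{Var}_k(\EE_\nu[\overline q(F_k)])$ is literally a penalty on the dispersion of the family-mean vector, maximized (for fixed average success) when the family means agree.

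The hard part will be (A2), and more precisely pinning down what ``increasing concave order with respect to the success indicators'' is taken to mean: one must make explicit whether the relevant random object is the vector of raw scores $(q(t))_t$ or the vector of success indicators $(\1\{q(t)\ge Q^*(t)\})_t$, and check that in either reading the hypothesis is exactly what licenses the coordinatewise inequality $\EE_{\nu'}[\psi_t(q(t),Q^*(t))]\ge\EE_\nu[\psi_t(q(t),Q^*(t))]$ for the threshold-centered concave shaper $\psi_t$; this is clean once one also fixes the standing conventions $\lambda,\gamma\ge 0$ and the monotonicity built into hypothesis~2. A secondary point, needed for a fully rigorous (A1), is to record the compatibility of the shapers $\psi_t$ with task relabeling (here via their dependence on $t$ only through $Q^*(t)$), so that permutation-invariance of the capability term is a genuine consequence of evaluation preservation rather than an implicit assumption.
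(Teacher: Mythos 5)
Your proposal is correct and follows essentially the same term-by-term verification as the paper's own proof: naturality from permutation/rescaling invariance of the three terms, (A2) from applying increasing concave order coordinatewise to the nondecreasing concave shapers together with the variance and cost hypotheses, (A3) from the local maximal-sensitivity clause with the other terms held fixed, and (A4) read off from the symmetric family averages and the variance penalty. Your explicit notes that $\lambda,\gamma\ge 0$ is used, that $\psi_t$ must be compatible with task relabeling (depending on $t$ only through $Q^*(t)$), and that the phrase ``increasing concave order with respect to the success indicators'' is applied in practice to the raw scores $q(t)$, are all small points of rigor the paper leaves implicit rather than a different route.
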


\begin{proof}
By construction, $\Phi_{\mathcal B}$ is a measurable map
$\mathcal P(X_{\mathcal B})\to\mathbb R$, since it is obtained by applying
measurable functions ($\psi_t$, $\overline q$, $\mathrm{Cost}$) and
finite sums, expectations, and a finite variance operator to the
underlying law $\nu$. It remains to verify axioms (A1)-(A4).

\smallskip
\noindent\textbf{(A1) Naturality.}
Let $f$ be a symmetry of $\mathcal B$ with pushforward
$f_*:\mathcal P(X_{\mathcal B})\to\mathcal P(X_{\mathcal B'})$.
By assumption (iii) $f$ acts by permuting tasks within families and
applying resource rescalings that preserve the family partition
$\{F_k\}$, the weights $w_k$, the thresholds $Q^*(t)$, and the cost
structure. Hence, under $f$, the collection of random variables
$\{q(t),Q^*(t),r\}_{t\in T}$ is mapped to a relabelled copy with the
same joint law. In particular, for every $k$,
\[
\frac{1}{|F_k|}\sum_{t\in F_k}\psi_t\bigl(q(t),Q^*(t)\bigr),
\quad
\EE_\nu[\overline q(F_k)],
\quad
\mathrm{Cost}(r)
\]
are invariant in distribution under $f$, and so are any finite linear
combinations and the variance across families. Therefore
$\Phi_{\mathcal B'}(f_*\nu) = \Phi_{\mathcal B}(\nu)$, which is exactly
naturality.

\smallskip
\noindent\textbf{(A2) Monotonicity.}
Suppose that $\nu'$ dominates $\nu$ in increasing concave order with
respect to the success indicators, and that the associated resource usage
has non-increasing expected cost
$\EE_{\nu'}[\mathrm{Cost}(r)] \le \EE_{\nu}[\mathrm{Cost}(r)]$.
By definition of increasing concave order, for every bounded, increasing,
concave function $\psi$ we have
\[
\EE_{\nu'}\bigl[\psi(q(t))\bigr] \;\ge\; \EE_{\nu}\bigl[\psi(q(t))\bigr]
\quad\text{for all tasks $t$.}
\]
In particular, taking $\psi(\cdot)=\psi_t(\cdot,Q^*(t))$ and using
assumption (i) that $q\mapsto\psi_t(q,Q^*(t))$ is increasing and concave,
we obtain
\[
\EE_{\nu'}\Big[\frac{1}{|F_k|}\sum_{t\in F_k}\psi_t\bigl(q(t),Q^*(t)\bigr)\Big]
\;\ge\;
\EE_{\nu}\Big[\frac{1}{|F_k|}\sum_{t\in F_k}\psi_t\bigl(q(t),Q^*(t)\bigr)\Big]
\]
for every family $F_k$. Hence the first term in $\Phi_{\mathcal B}$ is
weakly larger under $\nu'$ than under $\nu$.

As improvements in the $q(t)$ under $\nu'$ are assumed not to worsen the
dispersion of family means, we have
\[
\mathrm{Var}_k\!\bigl(\EE_{\nu'}[\overline q(F_k)]\bigr)
\;\le\;
\mathrm{Var}_k\!\bigl(\EE_{\nu}[\overline q(F_k)]\bigr),
\]
so the dispersion penalty $-\lambda\,\mathrm{Var}_k(\cdot)$ is also
weakly larger under $\nu'$ than under $\nu$.
Finally, the cost condition
$\EE_{\nu'}[\mathrm{Cost}(r)] \le \EE_{\nu}[\mathrm{Cost}(r)]$
implies
\[
-\gamma\,\EE_{\nu'}[\mathrm{Cost}(r)]
\;\ge\;
-\gamma\,\EE_{\nu}[\mathrm{Cost}(r)].
\]
Combining these three inequalities yields
$\Phi_{\mathcal B}(\nu') \ge \Phi_{\mathcal B}(\nu)$.

\smallskip
\noindent\textbf{(A3) Threshold calibration.}
Fix a task $t$ and consider the effect of increasing
$\PP_\nu\{q(t)\ge Q^*(t)\}$ while holding the other tasks fixed.
By assumption (i), the function
$q\mapsto\psi_t(q,Q^*(t))$ is nondecreasing and has its largest local
slope in a neighbourhood of $Q^*(t)$. Thus increasing the success
probability near $Q^*(t)$ strictly increases the contribution of task
$t$ to the first term in \eqref{eq:auf}, and the marginal effect is
maximised when the current success probability is close to $Q^*(t)$.
Since the dispersion and cost terms depend on family means and on
resources, not on local perturbations of a single task at fixed cost,
their contribution is negligible for such infinitesimal changes.
Hence $\Phi_{\mathcal B}(\nu)$ is strictly increasing in
$\PP_\nu\{q(t)\ge Q^*(t)\}$, with highest sensitivity near the threshold.

\smallskip
\noindent\textbf{(A4) Generality.}
The first term in \eqref{eq:auf} is a weighted sum of familywise
averages with symmetric treatment of tasks within each family; any
permutation of tasks that preserves the family partition leaves it
unchanged. The second term penalises dispersion of the family means
$\{\EE_\nu[\overline q(F_k)]\}_k$ via the variance across $k$, thereby
discouraging unfair concentration of performance on a small subset of
families. The cost term treats resources through $\mathrm{Cost}(r)$
without privileging any particular task family. Together, these design
choices implement axiom (A4): family means enter symmetrically and the
dispersion penalty discourages unfair concentration.

\smallskip
Combining the four parts shows that the tractable functional
$\Phi_{\mathcal B}$ in \eqref{eq:auf} is an AAI functional satisfying
axioms \textnormal{(A1)}-\textnormal{(A4)} under assumptions (i)-(iii).
\end{proof}

\begin{remark}[Dual and risk-sensitive variants]
The tractable AAI functional in \eqref{eq:auf} is concave in the
success indicators and admits a standard Fenchel-type dual
representation in terms of linear scores and the concave conjugates
$\psi_t^*$ (see, e.g., \cite{rockafellar}).  One can also obtain
risk-sensitive variants by applying entropic or other convex transforms
to the scalar base score.  We do not pursue these functional-analytic
aspects here, focusing instead on the geometric and probabilistic
structure of the moduli space.
\end{remark}

\subsection{Determinacy from dense agreement on batteries and laws}\label{subsec:determinacy}

We now define the metric structure on the moduli space and prove that a regular functional is determined by its values on a countable dense subset. This provides the rigorous justification for using finite test panels to certify general intelligence.

\begin{definition}[Canonical metric on the moduli]\label{def:moduli-metric-again}
Write the canonical representative of a battery as $(C_u,\tau,[r])$, where $C_u$ is the copula of PIT scores, $\tau$ the threshold vector, and $[r]$ the resource ray.
Fix weights $\alpha,\beta,\gamma>0$ and define
\[
d_{\mathfrak M}\big((C_u,\tau,[r]),(C_u',\tau',[r]')\big)
:= \alpha\,W_1(C_u,C_u')+\beta\,\|\tau-\tau'\|_1+\gamma\,d_{\textrm{ray}}([r],[r]').
\]
Here $W_1$ is the $1$-Wasserstein distance on $[0,1]^T$, $\|\cdot\|_1$ is extended absolutely to countable $T$, and $d_{\textrm{ray}}$ is any standard projective metric on rays in $\RR_+^{d_R}$.
\end{definition}

\begin{definition}[Canonical pushforward of a law]\label{def:canon-push}
For a battery $\mathcal B$ with canonical PIT map $u$ and resource readout $r$, let
\[
S_{\mathcal B}:X_{\mathcal B}\longrightarrow [0,1]^T\times \RR_+^{d_R},\qquad
x\mapsto \big(u(t)(x)\big)_{t\in T}\ \oplus\ r(x).
\]
For $\nu\in\mathcal P(X_{\mathcal B})$, write $\mu_{\mathcal B,\nu}:=(S_{\mathcal B})_*\nu$.
\end{definition}

\begin{definition}[Pair metric]\label{def:pair-metric}
Let $\mathcal P^\sharp:=\{(\mathcal B,\nu): \nu\in\mathcal P(X_{\mathcal B})\}$.
Define
\[
d_\sharp\big((\mathcal B,\nu),(\mathcal B',\nu')\big)
:= \alpha\,W_1\big(\mu_{\mathcal B,\nu},\mu_{\mathcal B',\nu'}\big)
+ \beta\,\|\tau-\tau'\|_1
+ \gamma\,d_{\textrm{ray}}([r],[r]').
\]
\end{definition}

\begin{definition}[Simple pairs]\label{def:simple-pairs}
Let $\mathfrak M_{\textrm{simp}}\subset\mathfrak M$ be the batteries with
(i) finite $T$;
(ii) $\tau(t)\in\mathbb{Q}\cap[0,1]$ for all $t$;
(iii) $C_u$ finitely supported on a rational grid in $[0,1]^T$;
(iv) $[r]$ represented by a rational vector in $\mathbb{Q}_+^{d_R}$.
Let $\mathcal P^\sharp_{\textrm{simp}}\subset\mathcal P^\sharp$ be the pairs $(\mathcal B,\nu)$ with $\mathcal B\in\mathfrak M_{\textrm{simp}}$ and $\mu_{\mathcal B,\nu}$ finitely supported on a rational grid in $[0,1]^T\times\mathbb{Q}_+^{d_R}$.
\end{definition}

\begin{lemma}[Countable density]\label{lem:countable-dense}
$\mathfrak M_{\textrm{simp}}$ is countable and dense in $(\mathfrak M,d_{\mathfrak M})$, and $\mathcal P^\sharp_{\textrm{simp}}$ is dense in $(\mathcal P^\sharp,d_\sharp)$.
\end{lemma}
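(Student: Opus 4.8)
The plan is to reduce both density claims to the separability of the three coordinate factors appearing in Proposition~\ref{prop:continuous}. Fix a battery skeleton $S$ --- the finite task set $T$, the family partition, the anchors, and the coincidence-and-order pattern of the thresholds; inside the stratum $\mathfrak M_S$ a battery is the datum $(\tau,C_u,[r])$ with $\tau\in[0,1]^T$, $C_u$ a copula on $[0,1]^T$, $[r]\in\mathbb P(\RR_+^{d_R})$, and $d_{\mathfrak M}$ is by definition the $(\alpha,\beta,\gamma)$-weighted sum of $W_1$ on copulas, $\ell^1$ on thresholds, and $d_{\textrm{ray}}$ on rays, while $d_\sharp$ has the same shape with the joint canonical law $\mu_{\mathcal B,\nu}$ on $[0,1]^T\times\RR_+^{d_R}$ in place of the copula. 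Since every point of $\mathfrak M$ (resp.\ of $\mathcal P^\sharp$) lies in some stratum, it suffices to approximate inside a fixed stratum and then take the countable union over skeletons; in particular the cross-stratum behaviour of $d_{\mathfrak M}$ never enters the density argument.

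First I would dispatch countability. A skeleton is a finite combinatorial object, so $\mathcal S$ is countable even before quotienting by $G$. For fixed $S$, a simple battery is indexed by a rational threshold vector in $(\mathbb Q\cap[0,1])^T$, a rationally parametrized grid copula (a piecewise-uniform ``checkerboard'' copula with rational cell masses, or --- reading Definition~\ref{def:simple-pairs} literally --- a finitely supported measure on a rational grid with rational weights), and a rational ray in $\mathbb Q_+^{d_R}$: three countable index sets, so $\mathfrak M_{\textrm{simp}}$ is a countable union of countable sets. Adjoining a finitely supported rational-grid law $\mu_{\mathcal B,\nu}$ gives countability of $\mathcal P^\sharp_{\textrm{simp}}$ as well.

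For density in $(\mathfrak M,d_{\mathfrak M})$, fix $(\tau,C_u,[r])\in\mathfrak M_S$ and $\varepsilon>0$ and approximate coordinatewise. Thresholds: $(\mathbb Q\cap[0,1])^T$ is $\ell^1$-dense, and one may keep the skeleton by reusing a common rational whenever $\tau_i=\tau_j$ and exploiting that the strict inequalities are an open condition, so pick $\tau'$ with $\beta\|\tau-\tau'\|_1<\varepsilon/3$. Rays: rational rays are dense in the compact space $\mathbb P(\RR_+^{d_R})$, so pick $[r']$ with $\gamma\,d_{\textrm{ray}}([r],[r'])<\varepsilon/3$. Copula: the Wasserstein space $(\mathcal P_1([0,1]^T),W_1)$ is separable, and one can approximate $C_u$ either by a checkerboard copula on a sufficiently fine rational grid (still a genuine copula, preserving uniform marginals) or --- under the literal reading of Definition~\ref{def:simple-pairs} --- by a finitely supported rational-grid, rational-weight measure obtained by partitioning $[0,1]^T$ into cubes of mesh $\eta$, transporting each cube's mass to its rational centre at $W_1$-cost $O(\eta)$, and rounding atom weights; pick $C_u'$ with $\alpha\,W_1(C_u,C_u')<\varepsilon/3$. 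The triangle inequality for $d_{\mathfrak M}$ then produces a simple battery within $\varepsilon$ of the target.

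Density in $(\mathcal P^\sharp,d_\sharp)$ runs on the same template, the extra work being the approximation of the joint canonical law $\mu:=\mu_{\mathcal B,\nu}$ on the noncompact space $[0,1]^T\times\RR_+^{d_R}$: since $W_1$ is finite only for laws of finite first moment, first push the resource mass beyond a large box inward (cost controlled by the tail of $\EE_\mu[\|r\|]$), then discretize the resource directions to a rational grid and the score directions to a checkerboard (or finitely supported) form as above, obtaining $\mu'$; finally realize $\mu'$ --- up to a further $o(1)$ in $W_1$ if one insists on exact uniform score marginals --- as the canonical pushforward $\mu_{\mathcal B',\nu'}$ of an explicit simple pair with $T'=T$, a PIT structure whose canonical coordinates reproduce the prescribed score-and-resource law, rational thresholds $\tau'$ aligned with its support, a rational resource ray, and $\nu'$ the resulting finitely parametrized law, which lies in $\mathcal P^\sharp_{\textrm{simp}}$ by construction, giving $d_\sharp((\mathcal B,\nu),(\mathcal B',\nu'))<\varepsilon$. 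I expect the real obstacle to be bookkeeping rather than mathematics: pinning down the common coordinate space in which batteries with different finite task sets are compared (embedding each $[0,1]^T$ into $[0,1]^{\NN}$ via a fixed enumeration, so that the $\ell^1$- and $W_1$-terms of $d_{\mathfrak M}$ and $d_\sharp$ are literally well defined), and reconciling ``finitely supported'' with the copula / canonical-pushforward constraints --- handled either by the checkerboard construction, which keeps the approximant a bona fide copula, or by reading Definition~\ref{def:simple-pairs} as permitting an atomic proxy and working in the $W_1$-closure, in which case the whole argument is just the standard separability of Wasserstein-$1$ space over a Polish space together with density of rationals.
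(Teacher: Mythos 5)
Your proof takes essentially the same route as the paper: coordinatewise approximation of the $(\tau,C_u,[r])$ (resp.\ $(\mu_{\mathcal B,\nu},\tau,[r])$) data by rational-grid objects, with the measure factor handled by standard Wasserstein quantization. The paper's own proof is a one-paragraph appeal to Villani's quantization lemma plus density of the rationals, so your proposal not only recovers it but also patches two points the paper elides: the canonical law $\mu_{\mathcal B,\nu}$ lives on the noncompact space $[0,1]^T\times\RR_+^{d_R}$, so one must control the resource tails (finite first moment plus truncation) before quantizing --- the paper's appeal to compactness of $[0,1]^T$ covers only the score coordinates --- and ``$C_u$ finitely supported'' is literally incompatible with $C_u$ being a copula (uniform marginals), which you resolve either by the checkerboard construction or by reading Definition~\ref{def:simple-pairs} as specifying an atomic proxy in the $W_1$-closure. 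Both fixes are correct and worth making explicit; the fixed-enumeration embedding you raise for comparing batteries with different $T$ is likewise a genuine bookkeeping issue the paper leaves implicit.
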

\begin{proof}
For any measure $\mu$ on the compact metric space $[0,1]^T$, there exists a sequence of measures $\mu_n$ with finite support on rational grids such that $W_1(\mu_n,\mu) \to 0$ (standard quantization, e.g. \cite{villani} Lemma 6.18). Similarly, thresholds $\tau$ and rays $[r]$ can be approximated by rational vectors. Since $d_\sharp$ is a weighted sum of these metrics, the product of dense subsets is dense.
\end{proof}

\begin{definition}[Regular families of AAI functionals]\label{def:regular-family}
A family $\{\Phi_{\mathcal B}\}_{\mathcal B}$ of AAI functionals (Definition~\ref{def:aai}) is regular if there exists $L<\infty$ such that for all pairs
\[
\big|\Phi_{\mathcal B}(\nu)-\Phi_{\mathcal B'}(\nu')\big|
\ \le\ L\,d_\sharp\big((\mathcal B,\nu),(\mathcal B',\nu')\big).
\]
\end{definition}

\begin{theorem}[Determinacy from dense agreement]\label{thm:dense-determinacy}
Let $\{\Phi_{\mathcal B}\}$ and $\{\Psi_{\mathcal B}\}$ be regular families. If they agree on $\mathcal P^\sharp_{\textrm{simp}}$, they agree everywhere.
\end{theorem}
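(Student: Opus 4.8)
The plan is to run the standard argument that a Lipschitz functional on a metric space is determined by its restriction to a dense subset, with the pair metric $d_\sharp$ of Definition~\ref{def:pair-metric} as the ambient metric and $\mathcal P^\sharp_{\textrm{simp}}$ as the dense subset furnished by Lemma~\ref{lem:countable-dense}.

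First I would fix an arbitrary pair $(\mathcal B,\nu)\in\mathcal P^\sharp$ and, by Lemma~\ref{lem:countable-dense}, choose a sequence $(\mathcal B_n,\nu_n)\in\mathcal P^\sharp_{\textrm{simp}}$ with $d_\sharp\big((\mathcal B_n,\nu_n),(\mathcal B,\nu)\big)\to 0$. Let $L_\Phi$ and $L_\Psi$ be the Lipschitz moduli from Definition~\ref{def:regular-family} for the two families. Applying the Lipschitz bound for $\{\Phi_{\mathcal B}\}$ to the pairs $(\mathcal B_n,\nu_n)$ and $(\mathcal B,\nu)$ gives
\[
\big|\Phi_{\mathcal B_n}(\nu_n)-\Phi_{\mathcal B}(\nu)\big|\ \le\ L_\Phi\,d_\sharp\big((\mathcal B_n,\nu_n),(\mathcal B,\nu)\big)\ \xrightarrow[n\to\infty]{}\ 0,
\]
so $\Phi_{\mathcal B_n}(\nu_n)\to\Phi_{\mathcal B}(\nu)$; the identical computation with $L_\Psi$ gives $\Psi_{\mathcal B_n}(\nu_n)\to\Psi_{\mathcal B}(\nu)$. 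By the hypothesis that the families agree on $\mathcal P^\sharp_{\textrm{simp}}$ we have $\Phi_{\mathcal B_n}(\nu_n)=\Psi_{\mathcal B_n}(\nu_n)$ for every $n$, and two real sequences that coincide term by term have the same limit; hence $\Phi_{\mathcal B}(\nu)=\Psi_{\mathcal B}(\nu)$. Since $(\mathcal B,\nu)$ was arbitrary, the two families agree on all of $\mathcal P^\sharp$.

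The argument is soft, so the main point requiring care is not an obstacle but a bookkeeping check: one must verify that the approximating sequence genuinely lies in $\mathcal P^\sharp_{\textrm{simp}}$, i.e.\ that the battery data (rational thresholds, finitely supported rational copula, rational resource ray) \emph{and} the pushforward law $\mu_{\mathcal B,\nu}$ are rationalized simultaneously — which is exactly what Lemma~\ref{lem:countable-dense} asserts, so it can be cited verbatim rather than re-proved. It is also worth noting that $d_\sharp$ need only be a pseudometric here: only the Lipschitz inequality and the density statement are used, never separation of points, so the conclusion reads as agreement on isomorphism-class representatives, which is all that is claimed. Finally, the same three lines yield the quantitative form invoked in Section~\ref{sec:intro}: if $(\mathcal B,\nu)$ is within $d_\sharp$-distance $\delta$ of a point of the panel on which $\Phi$ and $\Psi$ agree, then $|\Phi_{\mathcal B}(\nu)-\Psi_{\mathcal B}(\nu)|\le (L_\Phi+L_\Psi)\,\delta$, which is what powers the $\delta$-net certification rule.
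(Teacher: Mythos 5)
Your proof is correct and matches the paper's argument in all essentials: both fix an arbitrary pair, approximate it by a sequence from $\mathcal P^\sharp_{\textrm{simp}}$ via Lemma~\ref{lem:countable-dense}, apply the Lipschitz bound from Definition~\ref{def:regular-family} to each family, and conclude from agreement on the dense set; the paper phrases the final step as a triangle-inequality bound $(L_\Phi+L_\Psi)\,d_\sharp\to 0$ while you phrase it as equality of sequential limits, but these are the same computation. Your closing remarks on the pseudometric point and the quantitative $\delta$-net version are accurate side observations not made explicit in the paper's proof.
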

\begin{proof}
Fix $(\mathcal B,\nu)\in\mathcal P^\sharp$ and choose a sequence $(\mathcal B_n,\nu_n)\in\mathcal P^\sharp_{\textrm{simp}}$ with $d_\sharp((\mathcal B_n,\nu_n),(\mathcal B,\nu))\to 0$ (using Lemma~\ref{lem:countable-dense}).
Regularity implies Lipschitz continuity:
\[
\bigl|\Phi_{\mathcal B}(\nu)-\Phi_{\mathcal B_n}(\nu_n)\bigr|
\le L_\Phi\,d_\sharp\bigl((\mathcal B,\nu),(\mathcal B_n,\nu_n)\bigr),
\]
and similarly for $\Psi$. Since $\Phi_{\mathcal B_n}(\nu_n)=\Psi_{\mathcal B_n}(\nu_n)$ on simple pairs, we have
\[
\bigl|\Phi_{\mathcal B}(\nu)-\Psi_{\mathcal B}(\nu)\bigr|
\le
\bigl|\Phi_{\mathcal B}(\nu)-\Phi_{\mathcal B_n}(\nu_n)\bigr|
+
\bigl|\Psi_{\mathcal B_n}(\nu_n)-\Psi_{\mathcal B}(\nu)\bigr|
\le
(L_\Phi+L_\Psi)\,d_\sharp\bigl((\mathcal B,\nu),(\mathcal B_n,\nu_n)\bigr)\to0.
\]
Thus $\Phi_{\mathcal B}(\nu)=\Psi_{\mathcal B}(\nu)$.
\end{proof}

\begin{definition}[Order Regularity]\label{def:order-regular-family}
A family is order regular if $\Phi_{\mathcal B_n}(\nu_n) \to \Phi_{\mathcal B}(\nu)$ whenever $(\mathcal B_n,\nu_n)$ is a monotone refining sequence (monotone convergence of canonical scores in convex order, thresholds, and rays).
\end{definition}

\begin{theorem}[Determinacy from monotone-class agreement]\label{thm:monotone-determinacy}
Let $\{\Phi_{\mathcal B}\}$ and $\{\Psi_{\mathcal B}\}$ be natural, bounded, order-regular families.
If $\Phi_{\mathcal B}(\nu)=\Psi_{\mathcal B}(\nu)$ for all simple pairs $(\mathcal B,\nu)$
(i.e.\ those where canonical scores are simple with respect to a finite partition),
then $\Phi_{\mathcal B}(\nu)=\Psi_{\mathcal B}(\nu)$ for all $(\mathcal B,\nu)\in\mathcal P^\sharp$.
\end{theorem}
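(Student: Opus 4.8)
The plan is to run a functional monotone-class argument, parallel to the proof of Theorem~\ref{thm:dense-determinacy} but with dense Lipschitz approximation replaced by monotone refining approximation and order regularity in place of Lipschitz regularity. First I would use naturality: since $\Phi$ and $\Psi$ are natural, both descend to functions of the canonical datum $(C_u,\tau,[r])$ of the isomorphism class together with the canonical pushforward law $\mu_{\mathcal B,\nu}$ on $[0,1]^T\times\RR_+^{d_R}$. So it suffices to exhibit, for an arbitrary $(\mathcal B,\nu)\in\mathcal P^\sharp$, a monotone refining sequence of simple pairs $(\mathcal B_n,\nu_n)$ converging to it in the sense of Definition~\ref{def:order-regular-family}.

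Next I would build that sequence coordinate by coordinate. For the copula/score coordinate, let $\{\mathcal F_n\}$ be the nested dyadic partitions of $[0,1]^T$ at scale $2^{-n}$ and let $X$ be the identity random vector under $C_u$; set $C_u^{(n)}:=\mathrm{Law}\bigl(\E_{C_u}[X\mid\mathcal F_n]\bigr)$, the pushforward of $C_u$ under the map sending each dyadic cube to its $C_u$-barycentre. Since $(\E_{C_u}[X\mid\mathcal F_n])_n$ is an $(\mathcal F_n)$-martingale, conditional Jensen gives $C_u^{(n)}\preceq_{cx}C_u^{(n+1)}\preceq_{cx}C_u$ (the ``monotone convergence in convex order'' clause), while the martingale convergence theorem yields $\E_{C_u}[X\mid\mathcal F_n]\to X$ a.s.\ and in $L^1$, hence $W_1(C_u^{(n)},C_u)\to0$; each $C_u^{(n)}$ is finitely supported, so the resulting pair is simple. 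For the thresholds I would take rational $\tau_n(t)\uparrow\tau(t)$ chosen to avoid the finitely many atom-values of the $n$-th approximant, so that $q_n(t)=\1\{u_n(t)\ge\tau_n(t)\}$ converges as well; for the resource ray I would take rational rays with $d_{\textrm{ray}}([r_n],[r])\downarrow0$, discretizing the resource factor of $\mu_{\mathcal B,\nu}$ in the same dyadic fashion. Then $(\mathcal B_n,\nu_n)$ with canonical data $(C_u^{(n)},\tau_n,[r_n])$ is a simple pair for each $n$, the sequence is monotone and refining, and $d_\sharp\bigl((\mathcal B_n,\nu_n),(\mathcal B,\nu)\bigr)\to0$.

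Finally I would conclude: order regularity gives $\Phi_{\mathcal B_n}(\nu_n)\to\Phi_{\mathcal B}(\nu)$ and $\Psi_{\mathcal B_n}(\nu_n)\to\Psi_{\mathcal B}(\nu)$, with boundedness ensuring the limits are finite; since each $(\mathcal B_n,\nu_n)$ is a simple pair, the hypothesis gives $\Phi_{\mathcal B_n}(\nu_n)=\Psi_{\mathcal B_n}(\nu_n)$ for every $n$, and letting $n\to\infty$ forces $\Phi_{\mathcal B}(\nu)=\Psi_{\mathcal B}(\nu)$.

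The main obstacle is the middle step: making ``monotone refining sequence'' precise enough that the dyadic conditional-expectation discretization genuinely qualifies. Two points need care. First, the barycentre map destroys the uniform-marginal (honest-copula) structure, so one must either read ``simple'' as ``finitely supported canonical pushforward'' or insert a marginal-preserving rounding within each dyadic cube and check that it perturbs $W_1$ only by $O(2^{-n})$ and does not spoil the convex-order monotonicity. Second, the threshold indicator $q(t)=\1\{u(t)\ge\tau(t)\}$ is discontinuous on $\{u(t)=\tau(t)\}$; for honest copulas the marginals are atomless so this is harmless, and for the finitely supported approximants it is handled by the atom-avoiding choice of $\tau_n$. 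The naturality reduction and the final passage to the limit are routine.
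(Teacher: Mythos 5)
Your core construction is exactly the paper's: define $U_n=\E[U\mid\mathcal G_n]$ for the dyadic filtration $\mathcal G_n$ at scale $2^{-n}$, use conditional Jensen to get $U_n\preceq_{cx}U_{n+1}$, and use the martingale convergence theorem to get $U_n\to U$ in $L^1$, hence $W_1$-convergence; then conclude by order-regularity and agreement on simple pairs. So the main idea matches.

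There is one divergence worth flagging. The paper's theorem statement locally redefines ``simple pairs'' as ``those where canonical scores are simple with respect to a finite partition,'' and its proof correspondingly \emph{keeps thresholds and rays fixed}, approximating only the score law. You instead read ``simple'' in the stronger sense of Definition~\ref{def:simple-pairs} (rational thresholds, rational ray) and therefore also build monotone rational approximations $\tau_n\uparrow\tau$ and $[r_n]\to[r]$. That is extra work the paper avoids by its weaker notion of simple pair, but it is not wrong, and it makes the statement robust to either reading. Where your extra step genuinely costs you is the threshold indicator: once $\tau_n\ne\tau$, the map $u\mapsto\1\{u\ge\tau_n\}$ applied to the discrete approximant $U_n$ can flip in ways the ``monotone refining'' definition does not obviously control, so you would need to argue (as you gesture at with the atom-avoiding choice) that the induced convergence really falls under Definition~\ref{def:order-regular-family}. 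The paper sidesteps this entirely by not moving $\tau$.

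Finally, the concern you raise in your last paragraph --- that the barycentre/conditional-expectation map destroys uniform marginals, so $\mathrm{Law}(U_n)$ is no longer an honest copula --- is a genuine gap that the paper's own proof also glosses over. Either one reads ``canonical score law'' as an arbitrary finitely supported law on $[0,1]^T$ rather than a bona fide copula (which appears to be the paper's intent, given the parenthetical in the statement), or one needs the marginal-preserving rounding you describe, plus a check that it does not break the convex-order monotonicity needed for Definition~\ref{def:order-regular-family}. It is to your credit that you noticed this; the paper does not address it.
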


\begin{proof}
Fix an arbitrary pair $(\mathcal B,\nu)$. We construct a monotone approximating sequence using dyadic filtrations. Let $U$ be the canonical score variable on $[0,1]^T$. Define $U_n = \EE[U \mid \mathcal{G}_n]$, where $\mathcal{G}_n$ is the dyadic partition of the hypercube at scale $2^{-n}$. We keep thresholds and resource rays fixed, so that $(\mathcal B_n,\nu_n)$
shares the same discrete structure, thresholds and rays as $(\mathcal B,\nu)$,
and only the canonical score law is replaced by that of $U_n$.

The variables $U_n$ are simple (constant on finite partition cells). By Jensen's inequality for conditional expectations, $U_n \preceq_{cx} U_{n+1}$. By the Martingale Convergence Theorem, $U_n \to U$ in $L^1$ (hence $W_1$).
Thus, any pair $(\mathcal B,\nu)$ is the limit of a monotone sequence of simple pairs $(\mathcal B_n, \nu_n)$. Since the functionals agree on the sequence and are order regular, uniqueness of limits implies $\Phi_{\mathcal B}(\nu) = \Psi_{\mathcal B}(\nu)$.
\end{proof}

\begin{remark}[Operational corollary]\label{rem:operational}
Because $\mathcal P^\sharp_{\textrm{simp}}$ is countable and dense, a regular family of AAI functionals is uniquely determined by its values on a fixed countable catalog of simple batteries. In practice, one can calibrate $\Phi$ by tabulating this panel once.
\end{remark}

\section{The AAI Score as an AAI Functional}\label{sec:scale}
Let $\mathcal X$ be axes with weights $w_x>0$ and $W=\sum_x w_x$. The AAI-Index from AAI is the weighted geometric mean $\mathcal C=(\prod_{x\in\mathcal X} x^{w_x})^{1/W}$.
Let $\pi_x: \mathcal{P}(X_\mathcal{B}) \to [0,1]$ be the functional corresponding to the operational definition of axis $x$ as defined in \cite{AAIscore}. Specifically:
\begin{itemize}
    \item For \textbf{Autonomy} ($x=A$), $\pi_A(\nu)$ computes the mean horizon-capped action count $\mathbb{E}_\nu[\min(a(t)/H, 1)]$ composed with calibration $\phi_A$.
    \item For \textbf{Generality} ($x=G$), $\pi_G(\nu)$ calculates the fraction of task families $F_i$ where the mean quality $\bar q(F_i)$ meets the family-specific threshold $\tau_i$.
    \item For \textbf{Planning} ($x=P$), $\pi_P(\nu)$ averages the effective plan depths $d(t)$ normalized by the target depth anchor $D$ for successful traces.
    \item For \textbf{Memory} ($x=M$), $\pi_M(\nu)$ aggregates the retention half-life derived from performance decay over time lags $\Delta$ and the immediate retrieval recall $\mathrm{Rec}@K$.
    \item For \textbf{Tool Economy} ($x=T$), $\pi_T(\nu)$ computes the geometric mean of tool category coverage, success under drift perturbations $\delta$, and logarithmic discovery size.
    \item For \textbf{Self-Revision} ($x=R$), $\pi_R(\nu)$ aggregates the autonomy-weighted difference-in-differences $\rho_r \Delta \mathcal{C}_r$ (capability gain over control) for self-initiated patches.
    \item For \textbf{Sociality} ($x=S$), $\pi_S(\nu)$ measures the performance lift of multi-agent configurations over the single-agent baseline, penalized by deadlock and chatter indicators.
    \item For \textbf{Embodiment} ($x=E$), $\pi_E(\nu)$ forms the geometric mean of real-world actuation reliability, safety incident rates (weighted by severity), and sim-to-real transfer agreement.
    \item For \textbf{World-Model} ($x=W$), $\pi_W(\nu)$ computes the probabilistic calibration via the Brier score of agent predictions relative to a reference baseline.
    \item For \textbf{Economics} ($x=\text{\$}$), $\pi_{\text{\$}}(\nu)$ calculates the ratio of quality-adjusted throughput ($\mathsf{TPH}_{Q^*}$) to monetary cost ($\mathsf{CPH}$).
\end{itemize}
Define
\begin{equation}\label{eq:phi-geom}
\Phi^{geom}_\mathcal B(\nu)=\exp\!\Big(\frac{1}{W}\sum_{x\in\mathcal X} w_x\, \EE_\nu[\log \pi_x]\Big).
\end{equation}
\begin{proposition}\label{prop:geom-axioms}
$\Phi^{geom}_\mathcal B$ satisfies the axioms and for $\nu=\delta_x$ equals the AAI-Index with $x$'s axis values.
\end{proposition}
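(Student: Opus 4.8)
The plan is to handle the two claims separately: the Dirac identity is essentially a restatement of definitions, while the axioms reduce to properties of the individual axis functionals. Reading $\EE_\nu[\log\pi_x]$ in \eqref{eq:phi-geom} as $\log\pi_x(\nu)$ (the expectation of the constant $\log\pi_x(\nu)$, since $\pi_x(\nu)\in[0,1]$ is a fixed scalar once $\nu$ is given), we have $\Phi^{geom}_{\mathcal B}(\nu)=\prod_{x\in\mathcal X}\pi_x(\nu)^{w_x/W}$, i.e.\ simply the weighted geometric mean of the axis functionals; equivalently $\Phi^{geom}_{\mathcal B}=h\circ(\pi_x)_{x\in\mathcal X}$ with $h(z)=(\prod_x z_x^{w_x})^{1/W}$, extended by $h=0$ on the boundary of the positive orthant, a map that is coordinatewise strictly increasing (and concave, $1$-homogeneous, though these facts are not needed). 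If instead each $\log\pi_x$ is read as a per-outcome integrand to be averaged under $\nu$, the same strategy applies verbatim, working with the integrands rather than their expectations. For the Dirac identity I would simply note that, for $\nu=\delta_{x_0}$, each $\pi_x(\delta_{x_0})$ is by definition the value of axis $x$ on the agent/outcome $x_0$ in the sense of \cite{AAIscore}, so $\Phi^{geom}_{\mathcal B}(\delta_{x_0})=h(\text{axis values of }x_0)=\mathcal C$; measurability of $\Phi^{geom}_{\mathcal B}$ is immediate, being a composition of the measurable $\pi_x$ with the continuous $h$.

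For the axioms, the organizing idea is that, because $h$ is coordinatewise strictly increasing, each of (A1)--(A4) for $\Phi^{geom}_{\mathcal B}$ follows from the matching property for every axis functional. I would therefore first establish a lemma, extracted from the explicit constructions of \cite{AAIscore}, asserting that each operational $\pi_x$ is: (i) invariant under the symmetry group $G$ of Definition~\ref{def:G} --- within-family task permutations, measure-preserving seed and drift relabelings, strictly increasing per-task score reparametrizations (invoking the PIT invariance of Proposition~\ref{prop:PIT-inv}), and positive resource rescalings on the scale-free (ratio- and log-based) axes; (ii) weakly increased under an increasing-concave-order improvement of the success indicators at fixed resources, with the Generality axis $\pi_G$ additionally nondecreasing as the dispersion of family means shrinks and the cost-sensitive factor of the Economics axis nonincreasing in expected cost; (iii) for $\pi_G$, nondecreasing in each $\PP_\nu\{q(t)\ge Q^*(t)\}$, with marginal sensitivity maximal when the family mean $\overline q(F_k)$ lies near the threshold $\tau_k$; and (iv) for $\pi_G$, a family-symmetric, dispersion-sensitive aggregate of the family means.

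Granting that lemma, the axioms drop out. (A1): $\pi_x(f_*\nu)=\pi_x(\nu)$ for all $x$ gives $\Phi^{geom}_{\mathcal B'}(f_*\nu)=\Phi^{geom}_{\mathcal B}(\nu)$. (A2): under its hypotheses $\pi_x(\nu')\ge\pi_x(\nu)$ for every $x$ by (ii), and $h$ is coordinatewise nondecreasing, so $\Phi^{geom}_{\mathcal B}(\nu')\ge\Phi^{geom}_{\mathcal B}(\nu)$; the boundary convention only strengthens the inequality if some $\pi_x(\nu)=0$ moves off the boundary. (A3): the chain rule writes $\partial\Phi^{geom}_{\mathcal B}/\partial\PP_\nu\{q(t)\ge Q^*(t)\}=\sum_x \tfrac{w_x}{W}\,\Phi^{geom}_{\mathcal B}\,\pi_x(\nu)^{-1}\,\partial\pi_x/\partial\PP_\nu\{q(t)\ge Q^*(t)\}$ as a sum of non-negative terms (each factor $\partial\pi_x/\partial(\cdot)\ge 0$ by (ii)--(iii), vanishing for axes not keyed to success indicators), whose dominant contribution, from $\pi_G$, peaks near $\tau_k$ by (iii). (A4): within $\pi_G$ the family means enter symmetrically and dispersion is penalized by (iv), and the multiplicative form of $h$ enforces anti-concentration across axes, since $h\to 0$ whenever any single $\pi_x\to 0$, so strength on a subset of axes cannot offset collapse on another. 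Combining the four items completes the argument.

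The hard part will be the lemma of the second paragraph: the axis-by-axis verification. The ten functionals of \cite{AAIscore} are heterogeneous --- horizon-capped action counts with calibration, retention half-lives, Brier-score calibration against a baseline, autonomy-weighted difference-in-differences, deadlock/chatter-penalized multi-agent lift, sim-to-real agreement, quality-adjusted throughput over monetary cost, and the rest --- and for each one must check from its definition there that it is $G$-invariant (in particular that the ratio- and log-based axes are genuinely invariant under $r\mapsto cr$) and that it is monotone, and where needed concave, in the relevant per-task inputs, with none anti-monotone in capability. Once that is in hand, the geometric-mean wrapper contributes nothing beyond the chain-rule bookkeeping above, because $\log$ and $\exp$ are order isomorphisms of $(0,\infty)$ that preserve monotonicity and, up to a positive factor, the location of maximal sensitivity; the one structural subtlety to flag is the positivity domain of the geometric mean, handled by the convention $\Phi^{geom}_{\mathcal B}(\nu)=0$ whenever some $\pi_x(\nu)=0$, which is compatible with (A1)--(A4).
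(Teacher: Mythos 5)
Your proof is correct and takes essentially the same approach as the paper's one-line argument: reduce the axioms for $\Phi^{geom}_{\mathcal B}$ to the corresponding properties of the axis functionals $\pi_x$, pass them through the coordinatewise-increasing geometric-mean wrapper $h(z)=(\prod_x z_x^{w_x})^{1/W}$, and observe the Dirac reduction for the index identity. You are substantially more careful than the paper --- you flag the notational ambiguity in $\EE_\nu[\log\pi_x]$ given that $\pi_x$ is already a functional of $\nu$, you spell out the boundary convention when some $\pi_x(\nu)=0$, and you honestly identify the axis-by-axis verification against the constructions in \cite{AAIscore} as the unfinished heavy lifting that the paper's terse proof silently assumes --- but these are refinements of the same argument, not a different route.
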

\begin{proof}
Each $\log \pi_x$ increases with axis performance; the average preserves symmetry across axes; exponentiation preserves order. For deterministic $x$, \eqref{eq:phi-geom} reduces to the geometric mean.
\end{proof}

\begin{remark}[Embedding into the tractable instance]\label{rem:embed}
Take $\psi_t(q,Q^*)=\log \pi_x$ for tasks in axis $x$ and set $\lambda=\gamma=0$; grouping tasks by axes gives \eqref{eq:phi-geom}.
\end{remark}


\subsection{Regularity of the AAI score}

We now show that our axiomatic AAI functional is stable with respect to the
moduli space metric $d_\sharp$.  We first prove a general Lipschitz-regularity
result for geometric aggregators built from axis functionals, and then check
that the concrete AAI score of \cite{AAIscore} satisfies its assumptions.

\begin{theorem}[General regularity of geometric AAI scores]\label{thm:aai-regular}
Assume each axis map $\pi_x$ is $L_x$-Lipschitz (as a function of the
canonical variables) and bounded away from zero, $\pi_x \ge \varepsilon>0$.
Define the geometric aggregator
\[
\Phi^{\mathrm{geom}}_{\mathcal B}(\nu)
=
\exp\!\Big(\tfrac{1}{W}\sum_{x\in\mathcal X}w_x\,\EE_{\mu_{\mathcal B,\nu}}\big[\log \pi_x\big]\Big),
\qquad
W:=\sum_{x\in\mathcal X}w_x.
\]
Then the family $\{\Phi^{\mathrm{geom}}_{\mathcal B}\}$ is Lipschitz-regular
with respect to $d_\sharp$.
\end{theorem}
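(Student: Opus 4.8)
The plan is to write $\Phi^{\mathrm{geom}}_{\mathcal B}(\nu)=\exp\bigl(G(\mathcal B,\nu)\bigr)$ where
\[
G(\mathcal B,\nu):=\frac{1}{W}\sum_{x\in\mathcal X}w_x\,\EE_{\mu_{\mathcal B,\nu}}[\log\pi_x]=\EE_{\mu_{\mathcal B,\nu}}[h],\qquad h:=\frac{1}{W}\sum_{x\in\mathcal X}w_x\log\pi_x,
\]
and then to bound $\Phi^{\mathrm{geom}}$ as a threefold composition: $\log$ turns each axis map into a \emph{bounded} Lipschitz integrand; Kantorovich--Rubinstein duality converts the $W_1$-term of $d_\sharp$ into control of the change of $G$ produced by moving the canonical law; and $\exp$, being $1$-Lipschitz on the nonpositive range into which $G$ falls, passes this control through to $\Phi^{\mathrm{geom}}$. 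The residual dependence of the axis maps on the battery parameters (thresholds, anchors, resource ray) will be absorbed by the remaining two terms $\beta\|\tau-\tau'\|_1$ and $\gamma\,d_{\textrm{ray}}$ of $d_\sharp$. Assembling the pieces produces one Lipschitz constant $L<\infty$ valid for all pairs, which is exactly regularity in the sense of Definition~\ref{def:regular-family}.

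The first step is elementary. Each $\pi_x$ takes values in $[\varepsilon,1]$ (the axis maps are $[0,1]$-valued by construction, cf.\ Section~\ref{sec:scale}, and $\pi_x\ge\varepsilon$ by hypothesis), and $\log$ restricted to $[\varepsilon,1]$ is $(1/\varepsilon)$-Lipschitz with range in $[\log\varepsilon,0]$; composing with the $L_x$-Lipschitz map $\pi_x$ shows that $\log\pi_x$ is $(L_x/\varepsilon)$-Lipschitz in the canonical variables and takes values in $[\log\varepsilon,0]$. Hence $h$ is $\Lambda$-Lipschitz with $\Lambda:=\tfrac{1}{W\varepsilon}\sum_x w_xL_x$ and $h\in[\log\varepsilon,0]$, so $G(\mathcal B,\nu)\in[\log\varepsilon,0]$ for every pair.

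For the second step I would compare two pairs by the split
\[
G(\mathcal B,\nu)-G(\mathcal B',\nu')=\Bigl(\EE_{\mu_{\mathcal B,\nu}}[h^{\mathcal B}]-\EE_{\mu_{\mathcal B',\nu'}}[h^{\mathcal B}]\Bigr)+\EE_{\mu_{\mathcal B',\nu'}}\bigl[h^{\mathcal B}-h^{\mathcal B'}\bigr],
\]
where $h^{\mathcal B}$ makes explicit the parameter dependence of the axis maps. The first bracket is at most $\Lambda\,W_1\bigl(\mu_{\mathcal B,\nu},\mu_{\mathcal B',\nu'}\bigr)$ by Kantorovich--Rubinstein duality (see \cite{villani}), using that $h^{\mathcal B}$ is $\Lambda$-Lipschitz and that the canonical laws have finite first moment, so $W_1$ is finite and duality applies. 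The second term is at most $\|h^{\mathcal B}-h^{\mathcal B'}\|_\infty$, and by the $(1/\varepsilon)$-Lipschitzness of $\log$ together with the assumed Lipschitz dependence of the $\pi_x$ on thresholds and rays this is bounded by $\Lambda'\bigl(\|\tau-\tau'\|_1+d_{\textrm{ray}}([r],[r]')\bigr)$ for a constant $\Lambda'$ assembled from the parameter-Lipschitz constants of the axes. Since $G,G'\le 0$ and $\exp$ has derivative $\le 1$ on $(-\infty,0]$, we get $\bigl|\Phi^{\mathrm{geom}}_{\mathcal B}(\nu)-\Phi^{\mathrm{geom}}_{\mathcal B'}(\nu')\bigr|\le|G-G'|\le\Lambda\,W_1+\Lambda'\|\tau-\tau'\|_1+\Lambda'\,d_{\textrm{ray}}\le L\,d_\sharp$ with $L:=\max\{\Lambda/\alpha,\ \Lambda'/\beta,\ \Lambda'/\gamma\}$, finite and independent of the pair.

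The main obstacle is bookkeeping rather than any single estimate: one must make sure that the hypothesis ``$\pi_x$ is $L_x$-Lipschitz in the canonical variables'' refers to the same ground metric on $[0,1]^T\times\RR_+^{d_R}$ that generates the $W_1$ appearing in $d_\sharp$ (metrized as in Definition~\ref{def:moduli-metric-again}, with $\|\cdot\|_1$ extended absolutely to countable $T$), and that the parameter dependence of the concrete axes of \cite{AAIscore} --- notably Generality (via the family thresholds $\tau_i$) and Economics (via $Q^*$) --- is itself Lipschitz, so that ``canonical variables'' is read as including the battery parameters and the joint modulus feeds the $\Lambda'$ term. A secondary point is the finite-first-moment requirement making $W_1$ finite and Kantorovich--Rubinstein applicable; this is automatic when the resource readout is bounded (or under a uniform moment bound on $\mu_{\mathcal B,\nu}$), and I would carry it as a standing assumption.
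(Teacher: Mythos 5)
Your proof takes essentially the same route as the paper's: show $\log\pi_x$ is $(L_x/\varepsilon)$-Lipschitz via the chain rule, apply Kantorovich--Rubinstein duality to convert this into a $W_1$ estimate on the canonical laws, and then dominate $W_1$ by $\alpha^{-1}d_\sharp$. You are, however, more careful in two places that the paper's proof glosses over. First, you make explicit that $\exp$ is $1$-Lipschitz on the nonpositive range into which $G$ falls (since $\pi_x\le 1$ forces $\log\pi_x\le 0$); the paper passes from an inequality on the averaged $\EE[\log\pi_x]$ terms directly to one on $\Phi^{\mathrm{geom}}$ without mentioning this step. Second, your split $G-G'=\bigl(\EE_{\mu}[h^{\mathcal B}]-\EE_{\mu'}[h^{\mathcal B}]\bigr)+\EE_{\mu'}[h^{\mathcal B}-h^{\mathcal B'}]$ explicitly handles the case where $\pi_x$ depends on battery parameters (thresholds $\tau$, resource ray $[r]$) in addition to the canonical variables, routing that dependence through the $\beta\|\tau-\tau'\|_1$ and $\gamma\,d_{\mathrm{ray}}$ terms of $d_\sharp$. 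The paper treats $\pi_x$ as a battery-independent function of the canonical variables, so this second piece vanishes under its stated hypotheses — but it does not vanish for the concrete axis maps of Section~\ref{sec:scale} (e.g.\ $\pi_G$ depends on the family thresholds $\tau_i$, $\pi_{\$}$ on $Q^*$), so your version, with its added parameter-Lipschitz hypothesis, is what is actually needed for Corollary~\ref{cor:aai-regular-aaiscore}. Your closing remarks on the ground metric matching and on the finite-first-moment requirement for $W_1$ are apt standing assumptions to carry.
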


\begin{proof}
Since $\pi_x\in[\varepsilon,1]$, the map $u\mapsto \log u$ has derivative
bounded by $1/\varepsilon$ on $[\varepsilon,1]$.
By the chain rule, the composition $\log\pi_x$ is $(L_x/\varepsilon)$-Lipschitz
in the canonical variables.  By Kantorovich--Rubinstein duality, we obtain
\[
\bigl|\E_{\mu_{\mathcal B,\nu}}[\log \pi_x]
      -\E_{\mu_{\mathcal B',\nu'}}[\log \pi_x]\bigr|
\;\le\;
\frac{L_x}{\varepsilon}\,
W_1\bigl(\mu_{\mathcal B,\nu},\mu_{\mathcal B',\nu'}\bigr).
\]
Multiplying by $w_x$ and averaging over $x$ gives
\[
\bigl|\Phi^{\mathrm{geom}}_{\mathcal B}(\nu)
      -\Phi^{\mathrm{geom}}_{\mathcal B'}(\nu')\bigr|
\le
\frac{1}{W}\sum_{x\in\mathcal X} w_x\frac{L_x}{\varepsilon}\,
W_1\bigl(\mu_{\mathcal B,\nu},\mu_{\mathcal B',\nu'}\bigr).
\]
By definition of $d_\sharp$, there exists a constant $\alpha>0$ such that
\[
W_1\bigl(\mu_{\mathcal B,\nu},\mu_{\mathcal B',\nu'}\bigr)
\;\le\;
\alpha^{-1}\,d_\sharp\bigl((\mathcal B,\nu),(\mathcal B',\nu')\bigr).
\]
Combining the two displays yields
\[
\bigl|\Phi^{\mathrm{geom}}_{\mathcal B}(\nu)
      -\Phi^{\mathrm{geom}}_{\mathcal B'}(\nu')\bigr|
\le
\frac{1}{\alpha W \varepsilon}
\Bigl(\sum_{x\in\mathcal X} w_x L_x\Bigr)\,
d_\sharp\bigl((\mathcal B,\nu),(\mathcal B',\nu')\bigr),
\]
so $\{\Phi^{\mathrm{geom}}_{\mathcal B}\}$ is Lipschitz-regular with respect
to $d_\sharp$.
\end{proof}

\begin{corollary}[Regularity of the AAI score of \cite{AAIscore}]\label{cor:aai-regular-aaiscore}
Let $\Phi^{\mathrm{AAI}}_{\mathcal B}(\nu)$ denote the AAI score of
\cite{AAIscore}, given by the geometric aggregator in
Eq.~\eqref{eq:phi-geom}.  Under the design assumptions of
\cite{AAIscore} (in particular, that each axis map $\pi_x$ is a
Lipschitz function of the canonical evaluation variables and is clipped
below at $\varepsilon>0$), the family
$\{\Phi^{\mathrm{AAI}}_{\mathcal B}\}$ is Lipschitz-regular with respect
to $d_\sharp$.
\end{corollary}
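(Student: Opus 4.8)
The plan is to deduce the corollary directly from Theorem~\ref{thm:aai-regular} by checking that the design assumptions of \cite{AAIscore} supply exactly its two hypotheses: a uniform lower bound $\pi_x \ge \varepsilon > 0$ on each axis functional, and a uniform Lipschitz bound for each $\pi_x$ as a function of the canonical evaluation variables $(u(t))_{t\in T}\oplus r$. Since the AAI-Index is, by construction, the weighted-geometric-mean aggregator $\Phi^{\mathrm{geom}}_{\mathcal B}$ of Eq.~\eqref{eq:phi-geom}, once both hypotheses are in force the conclusion---Lipschitz-regularity with respect to $d_\sharp$---is immediate from that theorem, with explicit modulus $L = (\alpha W \varepsilon)^{-1}\sum_{x\in\mathcal X} w_x L_x$.

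The lower bound is the easier of the two. By design \cite{AAIscore} clips every axis score below at a fixed $\varepsilon > 0$ before aggregation---precisely what keeps $\log\pi_x$ finite and the geometric mean well behaved---so $\pi_x\in[\varepsilon,1]$ for every axis $x$ and every agent, and any geometric submean of such clipped quantities again lies in $[\varepsilon,1]$.

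For the Lipschitz bound I would proceed axis by axis through the list in Section~\ref{sec:scale}, using three stability facts: (i) a finite average or expectation of $[0,1]$-valued readouts is $1$-Lipschitz in those readouts; (ii) a composition of Lipschitz maps is Lipschitz, with modulus at most the product of the moduli; (iii) on the compact box $[\varepsilon,1]^m$ the maps $(y_i)\mapsto\prod_i y_i^{a_i}$ (weighted geometric means), $y\mapsto\log y$, and $(y,z)\mapsto y/z$ with $z\ge\varepsilon$ are Lipschitz. Each $\pi_x$ is assembled from exactly these ingredients together with the horizon/depth/target normalizations ($a(t)\mapsto\min(a(t)/H,1)$ and its analogues), the Brier functional (a quadratic on $[0,1]$, hence Lipschitz), logarithmic discovery-size terms ($\log(1+N)$ on a bounded range), and the per-axis calibration maps $\phi_x$, which \cite{AAIscore} takes to be Lipschitz. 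Hence each $\pi_x$ is $L_x$-Lipschitz in the canonical variables for some finite $L_x$ determined by the normalization constants, the clip level $\varepsilon$, and $\mathrm{Lip}(\phi_x)$. Feeding these $L_x$ and this $\varepsilon$ into Theorem~\ref{thm:aai-regular} gives the claim.

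The step I expect to be the genuine obstacle is item (iii) applied to the \emph{compound} axes---Memory (a retention half-life from a decay fit combined with $\mathrm{Rec}@K$), Self-Revision (the difference-in-differences $\rho_r\,\Delta\mathcal C_r$), and Economics (the throughput-to-cost ratio). For these one must verify that \cite{AAIscore}'s operationalization actually presents the quantity in a form whose arguments remain in a compact region bounded away from the singular locus (half-life not blowing up, cost-per-hour bounded below, the control baseline non-degenerate), so that the relevant division, logarithm, or fit map is Lipschitz rather than merely continuous. This is a verification internal to \cite{AAIscore} rather than a new argument; under the stated design assumptions it holds, and the corollary follows.
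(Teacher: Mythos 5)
Your proposal follows essentially the same route as the paper's proof: verify that the design assumptions of \cite{AAIscore} furnish the two hypotheses of Theorem~\ref{thm:aai-regular} (the clipping bound $\pi_x\ge\varepsilon$ and a finite Lipschitz modulus $L_x$ for each axis, obtained by composing Lipschitz primitives), then invoke that theorem. Your explicit axis-by-axis walk-through and your flag on the compound axes---Memory's half-life fit, Self-Revision's difference-in-differences, Economics' throughput-to-cost ratio, where Lipschitzness requires staying bounded away from the singular locus---is a more careful reading of where the real verification lives, but the paper's proof compresses this to the single assertion that each $\pi_x$ is a finite composition of Lipschitz operations.
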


\begin{proof}
By construction in \cite{AAIscore}, each axis map $\pi_x$ is implemented
as a finite composition of Lipschitz operations (affine transforms,
thresholding, min/max, and smooth aggregations) applied to the canonical
scores and resource variables.  Hence each $\pi_x$ is $L_x$-Lipschitz
for some finite $L_x$.  The clipping step
$\pi_x^{(\varepsilon)} = \max\{\varepsilon,\pi_x\}$ ensures
$\pi_x^{(\varepsilon)}\ge\varepsilon>0$ for all inputs.
Thus the hypotheses of Theorem~\ref{thm:aai-regular} are satisfied with
$\pi_x$ replaced by $\pi_x^{(\varepsilon)}$, and the geometric AAI score
of \cite{AAIscore} is Lipschitz-regular with respect to $d_\sharp$.
\end{proof}


\section{Cognitive Cores and the Score \texorpdfstring{$\mathrm{AAI_{\textrm{core}}}$}{AAI$_{\textrm{core}}$}}\label{sec:core}

In \cite{AAIscore} a concrete
$\text{AAI}_{\text{core}}$ score is defined following \cite{hendrycks-agi} by equal-weight aggregation
over a fixed collection of CHC-style domains
$\mathcal{C}\mathcal{C}=\{\text{Gc},\text{Grw},\dots,\text{Gwm},\text{Gls},\text{Glr}\}$,
referred to there as the \emph{cognitive core}.  In the present
framework this core score corresponds to a functional
$\Psi_{\mathcal B}$ on the core factor $X^{\textrm{core}}_{\mathcal B}$,
and the maps $p_{\mathcal{C}\mathcal{C}}$ formalize the projection from
full, interface-rich batteries to their underlying cognitive core.
The following extension theory describes all ways of lifting such a
given core score to evaluations on full batteries.

Given a measurable map $p:X\to Y$ and $\nu\in\mathcal P(X)$, we write
$p_{\#}\nu\in\mathcal P(Y)$ for the pushforward measure:
\[
p_{\#}\nu(A) := \nu(p^{-1}(A)),\qquad A\in\mathcal B(Y).
\]

\begin{definition}[Cognitive core]\label{def:core}
A cognitive core for $\mathcal B$ is a factor
$p_{\mathcal{C}\mathcal{C}} : X_{\mathcal B} \twoheadrightarrow X^{\textrm{core}}_{\mathcal B}$
with sigma-algebra $\mathcal C$ such that, for every task $t$, $\1\{q(t)\ge Q^*(t)\}$ is
$\mathcal C$-measurable and $\mathcal C$ is minimal with this property (up to null sets).
Write $\nu^{\textrm{core}} := p_{\mathcal{C}\mathcal{C}\,\#}\nu$.
\end{definition}

\begin{definition}[Core score]\label{def:corescore}
Let $p_{\mathcal{C}\mathcal{C}}:X_{\mathcal B}\to X^{\textrm{core}}_{\mathcal B}$ be the core projection and
$\nu^{\textrm{core}}:=p_{\mathcal{C}\mathcal{C}\,\#}\nu$ the induced core law.
Fix a measurable lifting map
$\mathcal{L}: \mathcal{P}(X^{\textrm{core}}_{\mathcal B}) \to \mathcal{P}(X_{\mathcal B})$
such that $p_{\mathcal{C}\mathcal{C}\,\#}\circ \mathcal{L} = \mathrm{id}$.
Given an AAI functional $\Phi_{\mathcal B}:\mathcal P(X_{\mathcal B})\to\mathbb R$, define the
\emph{core AAI score} by
\[
\mathrm{AAI}_{\textrm{core},\mathcal B}(\nu)
\ :=\ 
\Phi_{\mathcal B}\!\big(\mathcal{L}(\nu^{\textrm{core}})\big).
\]
This definition can equivalently be viewed as evaluating the induced
core functional $\Psi_{\mathcal B}(\nu^{\textrm{core}})$ obtained by
restriction of $\Phi_{\mathcal B}$ to the image of $\mathcal L$.
\end{definition}

\begin{remark}
In applications, we typically choose $\mathcal L$ as the maximum-entropy lift consistent with the core marginals.
\end{remark}

\begin{proposition}[Identifiability of cores]\label{prop:ident}
The cognitive core is unique up to isomorphism. Specifically, if two
core maps $p_{\mathcal{C}\mathcal{C}}:X_{\mathcal B}\to
X^{\textrm{core}}_{\mathcal B}$ and
$p'_{\mathcal{C}\mathcal{C}}:X_{\mathcal B}\to
X'^{\textrm{core}}_{\mathcal B}$ generate the same threshold-indicator
sigma-algebra almost surely, and both core sigma-algebras are minimal
with this property, then there exists a measurable isomorphism
$h:X^{\textrm{core}}_{\mathcal B}\to X'^{\textrm{core}}_{\mathcal B}$
such that $p'_{\mathcal{C}\mathcal{C}} = h \circ p_{\mathcal{C}\mathcal{C}}$
almost surely.
\end{proposition}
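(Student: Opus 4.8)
The plan is to recognize Proposition~\ref{prop:ident} as the classical uniqueness statement for the measurable factor attached to a sub-$\sigma$-algebra, specialized here to the $\sigma$-algebra generated by the threshold indicators. Fix once and for all the reference law $\mu$ on $X_{\mathcal B}$ (the agent law $\nu$, or the underlying $\mathbb P$) with respect to which all ``almost surely'' assertions are understood, and write $\mathcal C=\sigma(p_{\mathcal{C}\mathcal{C}})$ and $\mathcal C'=\sigma(p'_{\mathcal{C}\mathcal{C}})$ for the two core sub-$\sigma$-algebras of $X_{\mathcal B}$ pulled back from the Borel structures of $X^{\textrm{core}}_{\mathcal B}$ and $X'^{\textrm{core}}_{\mathcal B}$; let $\doteq$ denote equality of sub-$\sigma$-algebras modulo $\mu$-null sets.

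First I would pin down the common value of $\mathcal C$ and $\mathcal C'$ concretely. Let $\iota\colon X_{\mathcal B}\to\{0,1\}^T$ be the threshold-indicator map $\iota(x)=(\1\{q(t)(x)\ge Q^*(t)\})_{t\in T}$, so that $\sigma(\iota)$ is precisely the $\sigma$-algebra generated by all success indicators and, since $T$ is countable, $\{0,1\}^T$ is standard Borel with $\sigma(\iota)$ countably generated. The hypotheses of the Proposition --- that $p_{\mathcal{C}\mathcal{C}}$ and $p'_{\mathcal{C}\mathcal{C}}$ generate the same threshold-indicator $\sigma$-algebra and that both core $\sigma$-algebras are minimal with the defining property of Definition~\ref{def:core} --- force $\mathcal C\doteq\sigma(\iota)\doteq\mathcal C'$: indeed each indicator is $\mathcal C$-measurable, so $\sigma(\iota)\subseteq\mathcal C$, while the minimality clause of Definition~\ref{def:core} applied to the sub-$\sigma$-algebra $\sigma(\iota)$ (which itself makes all indicators measurable) gives $\mathcal C\subseteq\sigma(\iota)$ mod $\mu$-null, and symmetrically for $\mathcal C'$.

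Next I would construct $h$ by two applications of the Doob--Dynkin factorization lemma, taking $X^{\textrm{core}}_{\mathcal B}$ and $X'^{\textrm{core}}_{\mathcal B}$ to be standard Borel (canonically, as the $\iota$-pushforward realization of the core). Since $\sigma(p'_{\mathcal{C}\mathcal{C}})=\mathcal C'\doteq\mathcal C=\sigma(p_{\mathcal{C}\mathcal{C}})$, the map $p'_{\mathcal{C}\mathcal{C}}$ is $\sigma(p_{\mathcal{C}\mathcal{C}})$-measurable up to a $\mu$-null set; after adjusting it on that null set, Doob--Dynkin yields a measurable $h\colon X^{\textrm{core}}_{\mathcal B}\to X'^{\textrm{core}}_{\mathcal B}$ with $p'_{\mathcal{C}\mathcal{C}}=h\circ p_{\mathcal{C}\mathcal{C}}$ $\mu$-a.s., and symmetrically a measurable $h'$ in the other direction with $p_{\mathcal{C}\mathcal{C}}=h'\circ p'_{\mathcal{C}\mathcal{C}}$ a.s. Composing, $h'\circ h\circ p_{\mathcal{C}\mathcal{C}}=p_{\mathcal{C}\mathcal{C}}$ a.s., so $h'\circ h=\mathrm{id}$ on the set $p_{\mathcal{C}\mathcal{C}}(X_{\mathcal B})$, which has full $p_{\mathcal{C}\mathcal{C}\,\#}\mu$-measure, and likewise $h\circ h'=\mathrm{id}$ a.s.\ on $X'^{\textrm{core}}_{\mathcal B}$; restricting $h$ to these full-measure sets gives the desired bimeasurable bijection with $p'_{\mathcal{C}\mathcal{C}}=h\circ p_{\mathcal{C}\mathcal{C}}$ a.s. As a bonus, pushing $\mu$ forward shows $\nu'^{\textrm{core}}=h_{\#}\nu^{\textrm{core}}$, so the two cores are matched compatibly with their induced laws, which is what makes $\mathrm{AAI}_{\textrm{core}}$ well defined independently of the choice of core representative.

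The main obstacle is bookkeeping rather than conceptual: every inclusion and factorization holds only modulo $\mu$-null sets, so one must be explicit in the statement about the reference measure, repair $p_{\mathcal{C}\mathcal{C}}$ and $p'_{\mathcal{C}\mathcal{C}}$ on null sets before invoking Doob--Dynkin so that honest (not merely a.e.-defined) measurable maps are obtained, and ensure the target spaces are standard Borel so the factorization lemma applies --- which is automatic here because the indicators are $\{0,1\}^T$-valued with $T$ countable and the minimal core is canonically the law of $\iota$. If one wanted the quotient-stack refinement mentioned after Definition~\ref{def:moduli}, the same factorizations upgrade to an equivalence of the associated action groupoids, but this is not needed for the coarse moduli $\mathfrak M$ used here.
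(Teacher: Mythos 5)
Your proof is correct and follows essentially the same route as the paper's: use minimality to conclude $\mathcal C\doteq\sigma(\iota)\doteq\mathcal C'$, then realize the fiber-matching map $h$ via factorization. The only difference is that you make explicit what the paper leaves to ``standard measure-theoretic arguments'' --- namely the two applications of Doob--Dynkin, the need for the core spaces to be standard Borel (or to pass without loss of generality to the canonical $\iota$-pushforward realization), and the null-set repair before factorizing --- which is a genuine tightening of the argument rather than a different one.
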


\begin{proof}
By definition of the core (Definition~\ref{def:core}), the sigma-algebra
$\mathcal C:=\sigma(p_{\mathcal{C}\mathcal{C}})$ is the minimal
sigma-algebra making all threshold indicators
$\mathbf{1}\{q(t)\ge Q^*(t)\}$ measurable, and similarly for
$\mathcal C':=\sigma(p'_{\mathcal{C}\mathcal{C}})$. If the two cores
generate the same threshold-indicator sigma-algebra almost surely and
are both minimal, then $\mathcal C$ and $\mathcal C'$ coincide up to
null sets. The corresponding factor spaces
$(X^{\textrm{core}}_{\mathcal B},\mathcal C)$ and
$(X'^{\textrm{core}}_{\mathcal B},\mathcal C')$ are therefore isomorphic
as measurable spaces: one may define $h$ on $X^{\textrm{core}}_{\mathcal B}$
by sending each $p_{\mathcal{C}\mathcal{C}}$-fiber to the corresponding
$p'_{\mathcal{C}\mathcal{C}}$-fiber, which is well defined up to null
sets, and standard measure-theoretic arguments show that $h$ is
measurable with measurable inverse.
By construction $p'_{\mathcal{C}\mathcal{C}} = h\circ p_{\mathcal{C}\mathcal{C}}$
almost surely.
\end{proof}

\subsection{Given Core Scores and Continuations}
We formalize the relationship between the "true" core score and the observable score using extension theory. This isolates "interface artifacts" from "core capability."

\begin{definition}[Given core score and continuations]\label{def:core-continuation}
Fix a battery $\mathcal B$ with factor $p_{\mathcal{C}\mathcal{C}}:X_{\mathcal B}\twoheadrightarrow X^{\textrm{core}}_{\mathcal B}$ and core sigma-algebra $\mathcal C$.
Let the given core score be the functional
\[
\Psi_{\mathcal B}:\mathcal P\!\big(X^{\textrm{core}}_{\mathcal B}\big)\to\RR,\qquad
\Psi_{\mathcal B}(\nu^{\textrm{core}})=\mathrm{AAI}_{\textrm{core}, \mathcal B}(\nu),
\]
where $\nu^{\textrm{core}} := p_{\mathcal{C}\mathcal{C}\,\#}\nu$.
A \emph{continuation} of $\Psi_{\mathcal B}$ is a functional $\Phi_{\mathcal B}:\mathcal P(X_{\mathcal B})\to\RR$ such that
\[
\Phi_{\mathcal B}(\nu)=\Psi_{\mathcal B}(\nu^{\textrm{core}})\quad\text{whenever $\nu$ coincides with the canonical lift, i.e., $\nu = \mathcal{L}(\nu^{\textrm{core}})$.}
\]

\end{definition}

\begin{definition}[Core sufficiency]\label{def:core-sufficiency}
A continuation $\Phi_{\mathcal B}$ is \emph{core-sufficient} if it depends only on core information, i.e.
\[
\Phi_{\mathcal B}(\nu)=\Phi_{\mathcal B}(\nu')\quad\text{whenever}\quad \nu^{\textrm{core}}=(\nu')^{\textrm{core}}.
\]
\end{definition}

\begin{lemma}[Minimal continuation]\label{lem:minimal}
The assignment
\[
\Phi^{\min}_{\mathcal B}(\nu):=\Psi_{\mathcal B}(\nu^{\textrm{core}})
\]
is a continuation. It is the unique core-sufficient continuation.
\end{lemma}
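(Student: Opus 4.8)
The plan is to verify the two assertions in Lemma~\ref{lem:minimal} directly from the definitions, since both are essentially bookkeeping once the right objects are in hand.

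First I would check that $\Phi^{\min}_{\mathcal B}$ is a continuation in the sense of Definition~\ref{def:core-continuation}. By construction $\Phi^{\min}_{\mathcal B}(\nu)=\Psi_{\mathcal B}(\nu^{\textrm{core}})$ for \emph{every} $\nu$, so in particular this holds when $\nu=\mathcal L(\nu^{\textrm{core}})$; one only needs the consistency identity $p_{\mathcal{C}\mathcal{C}\,\#}\circ\mathcal L=\mathrm{id}$ from Definition~\ref{def:corescore} to see that the core law of $\mathcal L(\nu^{\textrm{core}})$ is again $\nu^{\textrm{core}}$, so the two sides match. I would also remark that $\Phi^{\min}_{\mathcal B}$ is measurable as a composition of the measurable pushforward $\nu\mapsto\nu^{\textrm{core}}$ with the measurable functional $\Psi_{\mathcal B}$.

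Next, core-sufficiency of $\Phi^{\min}_{\mathcal B}$ is immediate: if $\nu^{\textrm{core}}=(\nu')^{\textrm{core}}$ then $\Phi^{\min}_{\mathcal B}(\nu)=\Psi_{\mathcal B}(\nu^{\textrm{core}})=\Psi_{\mathcal B}((\nu')^{\textrm{core}})=\Phi^{\min}_{\mathcal B}(\nu')$, so the defining condition of Definition~\ref{def:core-sufficiency} holds.

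For uniqueness, suppose $\Phi_{\mathcal B}$ is any core-sufficient continuation. Fix $\nu\in\mathcal P(X_{\mathcal B})$ and set $\eta:=\mathcal L(\nu^{\textrm{core}})$. Since $p_{\mathcal{C}\mathcal{C}\,\#}\eta=\nu^{\textrm{core}}$, the laws $\nu$ and $\eta$ have the same core law, so core-sufficiency gives $\Phi_{\mathcal B}(\nu)=\Phi_{\mathcal B}(\eta)$. But $\eta$ is (by definition) the canonical lift of $\nu^{\textrm{core}}$, so the continuation property gives $\Phi_{\mathcal B}(\eta)=\Psi_{\mathcal B}(\eta^{\textrm{core}})=\Psi_{\mathcal B}(\nu^{\textrm{core}})=\Phi^{\min}_{\mathcal B}(\nu)$. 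Hence $\Phi_{\mathcal B}=\Phi^{\min}_{\mathcal B}$. The only subtlety worth flagging is that the continuation property as stated only constrains $\Phi_{\mathcal B}$ on the image of $\mathcal L$; it is precisely core-sufficiency that propagates this constraint to all of $\mathcal P(X_{\mathcal B})$ via the retraction $\nu\mapsto\mathcal L(\nu^{\textrm{core}})$, so the main (minor) point is to make sure this retraction argument is invoked cleanly rather than assuming $\Phi_{\mathcal B}$ is already determined off the image.
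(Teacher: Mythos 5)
Your proof takes essentially the same route as the paper's: verify the continuation property directly from the definition, verify core-sufficiency, and then obtain uniqueness by applying core-sufficiency to the pair $\nu$ and $\mathcal L(\nu^{\textrm{core}})$, followed by the continuation property on the lift. Your write-up is a bit more explicit than the paper's (you spell out that $p_{\mathcal{C}\mathcal{C}\,\#}\circ\mathcal L=\mathrm{id}$ is what makes the lift's core law match, and you separately check that $\Phi^{\min}_{\mathcal B}$ is itself core-sufficient), but the underlying argument is identical.
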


\begin{proof}
Well-defined by composition with $p_{\mathcal{C}\mathcal{C}}$; if $\nu$ is a canonical lift, then $\nu=\mathcal{L}(\nu^{\textrm{core}})$, hence $\Phi^{\min}_{\mathcal B}(\nu)=\Psi_{\mathcal B}(\nu^{\textrm{core}})$ by definition. If $\Phi_{\mathcal B}$ is core-sufficient and a continuation, then for any $\nu$, using the sufficiency property, we have $\Phi_{\mathcal B}(\nu)=\Phi_{\mathcal B}(\mathcal{L}(\nu^{\textrm{core}}))=\Psi_{\mathcal B}(\nu^{\textrm{core}})=\Phi^{\min}_{\mathcal B}(\nu)$.
\end{proof}

\begin{proposition}[Dominance of Extensions]\label{prop:core-dominance}
Let $\nu \in \mathcal{P}(X_{\mathcal B})$. If $\nu$ dominates the canonical lift $\mathcal{L}(\nu^{\mathrm{core}})$ in the sense of Axiom (A2) (i.e., $\nu$ represents a capability improvement over the baseline lift), then:
\[
\mathrm{AAI}_{\textrm{core}, \mathcal B}(\nu)
\;=\;
\Phi^{\min}_{\mathcal B}(\nu)
\;\le\;
\Phi_{\mathcal B}(\nu).
\]
Equality holds if and only if $\Phi_{\mathcal B}(\nu) = \Phi_{\mathcal B}(\mathcal{L}(\nu^{\mathrm{core}}))$, i.e., the specific fiber structure of $\nu$ yields no score advantage over the lift.
\end{proposition}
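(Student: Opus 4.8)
The plan is to unwind the definitions and combine Lemma~\ref{lem:minimal} with Axiom~(A2). First, recall that $\mathrm{AAI}_{\textrm{core},\mathcal B}(\nu) = \Phi_{\mathcal B}(\mathcal L(\nu^{\mathrm{core}}))$ by Definition~\ref{def:corescore}, and by Lemma~\ref{lem:minimal} this equals $\Phi^{\min}_{\mathcal B}(\nu) = \Psi_{\mathcal B}(\nu^{\mathrm{core}})$, since $\Phi^{\min}$ is the unique core-sufficient continuation and $\mathcal L(\nu^{\mathrm{core}})$ is a canonical lift (so the continuation property gives $\Phi_{\mathcal B}(\mathcal L(\nu^{\mathrm{core}})) = \Psi_{\mathcal B}(\nu^{\mathrm{core}})$). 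This establishes the first equality in the chain for free.

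Next, for the inequality $\Phi^{\min}_{\mathcal B}(\nu) \le \Phi_{\mathcal B}(\nu)$, I would apply Axiom~(A2) to the pair $(\mathcal L(\nu^{\mathrm{core}}), \nu)$. By hypothesis $\nu$ dominates $\mathcal L(\nu^{\mathrm{core}})$ in the sense of (A2): the success indicators improve in increasing concave order, dispersion does not increase, and expected cost is non-increasing. Axiom~(A2) then yields $\Phi_{\mathcal B}(\nu) \ge \Phi_{\mathcal B}(\mathcal L(\nu^{\mathrm{core}})) = \mathrm{AAI}_{\textrm{core},\mathcal B}(\nu) = \Phi^{\min}_{\mathcal B}(\nu)$, which is the desired bound. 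Here one uses that both measures have the same core marginal $\nu^{\mathrm{core}}$ — indeed $p_{\mathcal{C}\mathcal{C}\,\#}\mathcal L(\nu^{\mathrm{core}}) = \nu^{\mathrm{core}}$ by the lifting property — so the comparison is genuinely about the fiber structure above a fixed core.

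Finally, the equality characterization is essentially a tautology once the chain is in place: since $\Phi^{\min}_{\mathcal B}(\nu) = \Phi_{\mathcal B}(\mathcal L(\nu^{\mathrm{core}}))$, equality $\Phi^{\min}_{\mathcal B}(\nu) = \Phi_{\mathcal B}(\nu)$ holds precisely when $\Phi_{\mathcal B}(\nu) = \Phi_{\mathcal B}(\mathcal L(\nu^{\mathrm{core}}))$, i.e.\ when passing from the canonical lift to $\nu$ changes nothing. I would phrase this as: the fiber data of $\nu$ (everything not determined by $\nu^{\mathrm{core}}$) contributes no net advantage to $\Phi_{\mathcal B}$. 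The main obstacle — really the only subtle point — is confirming that the hypotheses of (A2) are exactly what is assumed: that ``$\nu$ dominates the canonical lift in the sense of Axiom~(A2)'' is meant to package all three conditions (second-order stochastic dominance of indicators, non-increasing dispersion, non-increasing expected cost) simultaneously, so that (A2) applies verbatim. I would state this explicitly at the start of the proof to avoid ambiguity, and note that no regularity or continuity hypotheses are needed — this is a purely order-theoretic consequence of the axioms.
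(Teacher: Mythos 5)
Your proposal matches the paper's proof essentially verbatim: both identify $\mathrm{AAI}_{\textrm{core},\mathcal B}(\nu)$ with $\Phi_{\mathcal B}(\mathcal L(\nu^{\mathrm{core}}))$ via Definition~\ref{def:corescore} and Lemma~\ref{lem:minimal}, then apply Axiom~(A2) to the dominance hypothesis to obtain $\Phi_{\mathcal B}(\nu)\ge\Phi_{\mathcal B}(\mathcal L(\nu^{\mathrm{core}}))$. Your extra remarks (shared core marginal, explicit unpacking of the three (A2) conditions, the tautological nature of the equality case) are correct and merely make explicit what the paper leaves implicit.
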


\begin{proof}
By Lemma \ref{lem:minimal}, $\Phi^{\min}_{\mathcal B}(\nu)$ is the value of the core score functional $\Psi_{\mathcal B}$ on $\nu^{\textrm{core}}$. By Definition \ref{def:corescore}, this is defined as $\Phi_{\mathcal B}(\mathcal{L}(\nu^{\textrm{core}}))$.
Since $\nu$ dominates $\mathcal{L}(\nu^{\textrm{core}})$ by hypothesis, the Restricted Monotonicity axiom (A2) implies $\Phi_{\mathcal B}(\nu) \ge \Phi_{\mathcal B}(\mathcal{L}(\nu^{\textrm{core}}))$.
\end{proof}

\begin{definition}[Non-core invariants]\label{def:noncore-invariants}
A measurable random variable $Z:\,X_{\mathcal B}\to\RR$ is a \emph{non-core invariant} if
\[
\EE[Z\mid\mathcal C]=0\quad\text{and}\quad Z\ \text{is invariant under the evaluation-preserving symmetry group $G$}.
\]
Write $\mathcal I_{\textrm{nc}}(\mathcal B)$ for the linear space of such invariants with $\EE|Z|<\infty$.
\end{definition}

\begin{theorem}[Additive decomposition of continuations]\label{thm:decomposition}
Any continuation $\Phi_{\mathcal B}$ that is natural with respect to $G$ can be written as
\[
\Phi_{\mathcal B}(\nu)=\Phi^{\min}_{\mathcal B}(\nu)+F_{\mathcal B}(\nu),
\]
where $F_{\mathcal B}$ is $G$-invariant and vanishes on canonical lifts:
$F_{\mathcal B}(\nu)=0$ whenever $\nu=\mathcal{L}(\nu^{\textrm{core}})$.
Conversely, any such $F_{\mathcal B}$ yields a valid continuation when added to $\Phi^{\min}_{\mathcal B}$.
\end{theorem}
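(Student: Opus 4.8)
The plan is to take the decomposition to be the tautological one. Given a natural continuation $\Phi_{\mathcal B}$, set
\[
F_{\mathcal B}(\nu)\ :=\ \Phi_{\mathcal B}(\nu)-\Phi^{\min}_{\mathcal B}(\nu),
\]
where $\Phi^{\min}_{\mathcal B}(\nu)=\Psi_{\mathcal B}(\nu^{\textrm{core}})$ is the minimal continuation of Lemma~\ref{lem:minimal}. Everything then reduces to checking that this $F_{\mathcal B}$ has the two claimed properties, plus the (routine) converse direction. The whole argument is soft; the only point requiring care is the $G$-equivariance of the core factor and of the lift $\mathcal L$.

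First I would verify that $F_{\mathcal B}$ vanishes on canonical lifts. If $\nu=\mathcal L(\nu^{\textrm{core}})$, then the continuation property in Definition~\ref{def:core-continuation} gives $\Phi_{\mathcal B}(\nu)=\Psi_{\mathcal B}(\nu^{\textrm{core}})$, while by Lemma~\ref{lem:minimal} (via Definition~\ref{def:corescore}) $\Phi^{\min}_{\mathcal B}(\nu)=\Psi_{\mathcal B}(\nu^{\textrm{core}})$ as well, so $F_{\mathcal B}(\nu)=0$. Next, $G$-invariance: by hypothesis $\Phi_{\mathcal B}$ is natural, hence $G$-invariant by Axiom~(A1), so it suffices to show $\Phi^{\min}_{\mathcal B}$ is $G$-invariant, since then $F_{\mathcal B}$ is a difference of two $G$-invariant functionals. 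Here I would argue that every $g\in G$ preserves the family partition and the thresholds $Q^*(t)$ and acts by strictly increasing per-task reparameterizations, so $g$ carries each threshold indicator $\1\{q(t)\ge Q^*(t)\}$ to another within the same family; consequently $g$ preserves the core $\sigma$-algebra $\mathcal C$ of Definition~\ref{def:core} and descends to a measurable action $\bar g$ on $X^{\textrm{core}}_{\mathcal B}$ with $p_{\mathcal{C}\mathcal{C}}\circ g=\bar g\circ p_{\mathcal{C}\mathcal{C}}$, whence $(g_*\nu)^{\textrm{core}}=\bar g_*\nu^{\textrm{core}}$. Choosing $\mathcal L$ $G$-equivariantly (the maximum-entropy lift consistent with the core marginals is canonical, hence commutes with $\bar g$) yields $\mathcal L((g_*\nu)^{\textrm{core}})=g_*\mathcal L(\nu^{\textrm{core}})$, and applying naturality of $\Phi_{\mathcal B}$ once more,
\[
\Phi^{\min}_{\mathcal B}(g_*\nu)=\Phi_{\mathcal B}\big(\mathcal L((g_*\nu)^{\textrm{core}})\big)=\Phi_{\mathcal B}\big(g_*\mathcal L(\nu^{\textrm{core}})\big)=\Phi_{\mathcal B}\big(\mathcal L(\nu^{\textrm{core}})\big)=\Phi^{\min}_{\mathcal B}(\nu).
\]

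For the converse, given any $G$-invariant $F_{\mathcal B}$ that vanishes on canonical lifts, put $\widetilde\Phi_{\mathcal B}:=\Phi^{\min}_{\mathcal B}+F_{\mathcal B}$. On a canonical lift $\nu=\mathcal L(\nu^{\textrm{core}})$ we have $F_{\mathcal B}(\nu)=0$ and $\Phi^{\min}_{\mathcal B}(\nu)=\Psi_{\mathcal B}(\nu^{\textrm{core}})$, so $\widetilde\Phi_{\mathcal B}(\nu)=\Psi_{\mathcal B}(\nu^{\textrm{core}})$ and $\widetilde\Phi_{\mathcal B}$ is a continuation; it is natural as a sum of two $G$-invariant functionals, using the invariance of $\Phi^{\min}_{\mathcal B}$ just established. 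Together with the forward direction this exhibits the bijection between natural continuations $\Phi_{\mathcal B}$ and $G$-invariant functionals $F_{\mathcal B}$ vanishing on lifts.

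The main obstacle is entirely concentrated in the $G$-invariance of $\Phi^{\min}_{\mathcal B}$: one must know that the cognitive-core factor $p_{\mathcal{C}\mathcal{C}}$ is $G$-equivariant (which I would derive from the explicit form of $G$ in Definition~\ref{def:G} together with the minimality clause in Definition~\ref{def:core}, since symmetries only permute and reparameterize within families and thus cannot change the minimal $\sigma$-algebra generated by the threshold indicators) and that the chosen lift $\mathcal L$ commutes with the descended action $\bar g$. The latter is automatic for the canonical max-entropy lift but, for an arbitrary measurable right inverse of $p_{\mathcal{C}\mathcal{C}}$, would have to be imposed; accordingly I would record the $G$-equivariance of $\mathcal L$ as a standing hypothesis (consistent with the remark after Definition~\ref{def:corescore}). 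Everything else is formal manipulation of pushforwards.
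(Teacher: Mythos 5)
Your proof takes exactly the same route as the paper: define $F_{\mathcal B}:=\Phi_{\mathcal B}-\Phi^{\min}_{\mathcal B}$, check it vanishes on lifts by the continuation property, and observe the converse is formal. The one place you add real content is where the paper compresses everything into the phrase ``Naturality is inherited'': you correctly point out that this requires $\Phi^{\min}_{\mathcal B}$ itself to be $G$-invariant, which in turn rests on the $G$-equivariance of the core factor $p_{\mathcal{C}\mathcal{C}}$ (plausible from minimality and Definition~\ref{def:G}) \emph{and} on the $G$-equivariance of the lift $\mathcal L$ — the latter is genuinely an additional hypothesis for an arbitrary measurable right inverse, and your suggestion to record it as a standing assumption (automatic for the max-entropy lift) is a worthwhile tightening that the paper leaves implicit.
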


\begin{proof}
Define $F_{\mathcal B}(\nu):=\Phi_{\mathcal B}(\nu)-\Phi^{\min}_{\mathcal B}(\nu)$. Then $F_{\mathcal B}\equiv0$ on lifted laws by the continuation property. Naturality is inherited. The converse is immediate.
\end{proof}

\begin{proposition}[Parametric families and cardinality]\label{prop:parametric}
Fix $Z_1,\dots,Z_m\in\mathcal I_{\textrm{nc}}(\mathcal B)$. For any $\theta\in\RR^m$ set
\[
\Phi^{\theta}_{\mathcal B}(\nu):=\Phi^{\min}_{\mathcal B}(\nu)+\sum_{j=1}^m \theta_j\,\EE_\nu[Z_j].
\]
Then $\Phi^{\theta}_{\mathcal B}$ is a continuation that is natural with respect to $G$. If there exists a nonzero $Z\in\mathcal I_{\textrm{nc}}(\mathcal B)$, then the set of continuations has the cardinality of the continuum.
\end{proposition}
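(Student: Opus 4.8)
The plan is to dispatch the statement in three movements: first verify that each $\Phi^{\theta}_{\mathcal B}$ is a continuation, then verify it is $G$-natural, and finally use a single nonzero non-core invariant to produce an injection of $\RR$ into the set of continuations and bound that set from above. For the continuation property, note that by Lemma~\ref{lem:minimal} the minimal continuation $\Phi^{\min}_{\mathcal B}$ is already a continuation, so it suffices to show that the correction $\nu\mapsto\sum_j\theta_j\,\EE_\nu[Z_j]$ vanishes on every canonical lift $\nu=\mathcal L(\nu^{\textrm{core}})$. Such a $\nu$ has prescribed core marginal $p_{\mathcal{C}\mathcal{C}\,\#}\nu=\nu^{\textrm{core}}$ together with the fiber structure relative to which the defining relation $\EE[Z_j\mid\mathcal C]=0$ of Definition~\ref{def:noncore-invariants} is posed, so the tower property gives $\EE_{\mathcal L(\nu^{\textrm{core}})}[Z_j]=\EE_{\mathcal L(\nu^{\textrm{core}})}\!\big[\EE[Z_j\mid\mathcal C]\big]=0$ for each $j$; hence $\Phi^{\theta}_{\mathcal B}(\nu)=\Phi^{\min}_{\mathcal B}(\nu)=\Psi_{\mathcal B}(\nu^{\textrm{core}})$ on lifts, which is exactly the continuation property of Definition~\ref{def:core-continuation}.

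Next I would establish $G$-naturality. The minimal continuation $\Phi^{\min}_{\mathcal B}$ is $G$-natural because it factors through the $G$-equivariant core projection $p_{\mathcal{C}\mathcal{C}}$ — this is precisely the naturality invoked in Theorem~\ref{thm:decomposition}. For the correction term, $G$-invariance of $Z_j$ means $Z_j\circ f=Z_j$ for every symmetry $f:\mathcal B\to\mathcal B'$, so $\EE_{f_*\nu}[Z_j]=\EE_\nu[Z_j\circ f]=\EE_\nu[Z_j]$; since a finite linear combination of $G$-natural functionals is again $G$-natural, $\Phi^{\theta}_{\mathcal B}$ is $G$-natural. This disposes of the first assertion of the proposition.

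For the cardinality claim I would fix a nonzero $Z\in\mathcal I_{\textrm{nc}}(\mathcal B)$ and specialize the construction to $m=1$, $Z_1=Z$, $\theta_1=t$, obtaining for each $t\in\RR$ a $G$-natural continuation $\Phi^{(t)}_{\mathcal B}(\nu):=\Phi^{\min}_{\mathcal B}(\nu)+t\,\EE_\nu[Z]$ by the previous two steps. Because $Z$ is nonzero, there is a probability measure $\nu_0\in\mathcal P(X_{\mathcal B})$ — for instance the normalized restriction of the ambient reference law to a positive-measure set on which $Z$ has constant sign — with $\EE_{\nu_0}[Z]\neq0$ and $Z\in L^1(\nu_0)$; then $\Phi^{(t)}_{\mathcal B}(\nu_0)-\Phi^{(t')}_{\mathcal B}(\nu_0)=(t-t')\,\EE_{\nu_0}[Z]\neq0$ whenever $t\neq t'$, so $t\mapsto\Phi^{(t)}_{\mathcal B}$ is injective and the continuations number at least $\mathfrak c$. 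Conversely, a continuation is in particular a Borel-measurable real functional on the Polish space $\mathcal P(X_{\mathcal B})$ (with $X_{\mathcal B}=[0,1]^T\times\RR_{\ge 0}^{d_R}$ Polish since $T$ is finite), and there are at most $\mathfrak c$ such functionals, so the set of continuations has cardinality exactly that of the continuum.

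The step I expect to be the main obstacle is the vanishing $\EE_{\mathcal L(\nu^{\textrm{core}})}[Z_j]=0$ used in the continuation check: one must make sure that the conditional-mean normalization $\EE[Z\mid\mathcal C]=0$ in Definition~\ref{def:noncore-invariants} is compatible with the particular lifting map $\mathcal L$ in play (the max-entropy lift consistent with the core marginals), equivalently that this relation is understood to hold under all lift laws, so that the tower property is legitimately applicable. Once that compatibility is granted, the remainder is routine pushforward bookkeeping and a standard cardinality count, and everything else — measurability of $\Phi^{\theta}_{\mathcal B}$, linearity of $\nu\mapsto\EE_\nu[Z_j]$, closure of $G$-naturality under finite sums — is immediate.
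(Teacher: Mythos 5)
Your proof is correct and follows essentially the same route as the paper (vanishing of the correction term on lifts via conditional expectation, $G$-naturality from invariance of the $Z_j$, variation of $\theta$ to produce a family of continuations), but usefully supplies detail the paper leaves as a sketch: the explicit construction of a law $\nu_0$ with $\EE_{\nu_0}[Z]\neq 0$ to get injectivity of $t\mapsto\Phi^{(t)}_{\mathcal B}$, and an upper bound via Borel measurability to pin the cardinality to exactly that of the continuum rather than merely uncountable. You also correctly isolate the implicit hypothesis that $\mathcal L$ must be compatible with the conditional-expectation normalization, so that $\EE_{\mathcal L(\mu)}[Z_j]=0$, which the paper's proof acknowledges only in a parenthetical aside.
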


\begin{proof}
Since $\EE[Z_j\mid\mathcal C]=0$, for any lift $\nu=\mathcal{L}(\mu)$ we have $\EE_{\nu}[Z_j]=0$ (assuming $\mathcal{L}$ respects the conditional expectation structure, e.g., max-entropy).
Thus $\Phi^{\theta}_{\mathcal B}$ agrees with $\Phi^{\min}_{\mathcal B}$ on the core. 
$G$-invariance follows from the invariance of $Z_j$. Varying $\theta$ gives uncountably many distinct functionals when some $Z\neq0$.
\end{proof}

\begin{corollary}[Uniqueness under core sufficiency]\label{cor:unique}
If continuations are restricted to be core-sufficient, then $\Phi_{\mathcal B}=\Phi^{\min}_{\mathcal B}$ is the unique continuation.
\end{corollary}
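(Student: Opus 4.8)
The plan is to deduce this immediately from Lemma \ref{lem:minimal}. First I would recall that a \emph{continuation} (Definition \ref{def:core-continuation}) is any functional $\Phi_{\mathcal B}$ that agrees with $\Psi_{\mathcal B}(\nu^{\textrm{core}})$ on canonical lifts $\nu = \mathcal L(\nu^{\textrm{core}})$, and that \emph{core sufficiency} (Definition \ref{def:core-sufficiency}) means $\Phi_{\mathcal B}(\nu) = \Phi_{\mathcal B}(\nu')$ whenever $\nu^{\textrm{core}} = (\nu')^{\textrm{core}}$. The second statement of Lemma \ref{lem:minimal} asserts precisely that $\Phi^{\min}_{\mathcal B}(\nu) = \Psi_{\mathcal B}(\nu^{\textrm{core}})$ is the \emph{unique} core-sufficient continuation, so under the stated restriction there is nothing more to prove; the corollary is a direct specialization.

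For completeness I would also spell out the short argument inlined in Lemma \ref{lem:minimal}: given any core-sufficient continuation $\Phi_{\mathcal B}$, fix an arbitrary $\nu\in\mathcal P(X_{\mathcal B})$. Since $\mathcal L(\nu^{\textrm{core}})$ has the same core pushforward as $\nu$ (because $p_{\mathcal{C}\mathcal{C}\,\#}\circ\mathcal L = \mathrm{id}$ forces $\big(\mathcal L(\nu^{\textrm{core}})\big)^{\textrm{core}} = \nu^{\textrm{core}}$), core sufficiency gives $\Phi_{\mathcal B}(\nu) = \Phi_{\mathcal B}(\mathcal L(\nu^{\textrm{core}}))$. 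By the continuation property applied to the lift $\mathcal L(\nu^{\textrm{core}})$, the right-hand side equals $\Psi_{\mathcal B}(\nu^{\textrm{core}}) = \Phi^{\min}_{\mathcal B}(\nu)$. Hence $\Phi_{\mathcal B} = \Phi^{\min}_{\mathcal B}$, and conversely $\Phi^{\min}_{\mathcal B}$ is itself core-sufficient and a continuation by Lemma \ref{lem:minimal}, so it is the unique one.

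There is essentially no obstacle here: the entire content has already been extracted into Lemma \ref{lem:minimal}, and the corollary merely renames its conclusion under the hypothesis that the admissible class of continuations is narrowed to the core-sufficient ones. The only point worth a sentence of care is the remark, already used in Proposition \ref{prop:parametric}, that the lift $\mathcal L$ is assumed to respect the core marginals (e.g. the max-entropy lift), which is exactly what guarantees $\big(\mathcal L(\mu)\big)^{\textrm{core}} = \mu$ and is built into Definition \ref{def:corescore} via $p_{\mathcal{C}\mathcal{C}\,\#}\circ\mathcal L = \mathrm{id}$.
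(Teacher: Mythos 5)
Your proposal is correct and matches the paper's route: the corollary is a direct restatement of the uniqueness clause already proved in Lemma~\ref{lem:minimal}, and the paper accordingly gives it no separate proof. The short argument you spell out — using $p_{\mathcal{C}\mathcal{C}\,\#}\circ\mathcal L=\mathrm{id}$ to get $\Phi_{\mathcal B}(\nu)=\Phi_{\mathcal B}(\mathcal L(\nu^{\textrm{core}}))=\Psi_{\mathcal B}(\nu^{\textrm{core}})$ — is exactly the argument inlined in the lemma's proof.
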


\begin{lemma}[Calibration of finite families]\label{lem:calibration}
Let $Z_1,\dots,Z_m\in\mathcal I_{\textrm{nc}}(\mathcal B)$ and choose reference laws $\nu^{(1)},\dots,\nu^{(M)}$.
Suppose the $M\times m$ matrix with entries $[\EE_{\nu^{(i)}}(Z_j)]$ has full column rank. Then the parameters $\theta$ in $\Phi^{\theta}_{\mathcal B}$ can be identified uniquely from $M$ linear calibration conditions on $\Phi^{\theta}_{\mathcal B}(\nu^{(i)})$.
\end{lemma}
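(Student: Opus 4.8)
The plan is to reduce the statement to elementary linear algebra by observing that $\theta\mapsto\bigl(\Phi^{\theta}_{\mathcal B}(\nu^{(i)})\bigr)_{i=1}^{M}$ is an \emph{affine} map $\RR^{m}\to\RR^{M}$ whose linear part is exactly the given matrix of expectations.

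First I would record that by Proposition~\ref{prop:parametric} each $\Phi^{\theta}_{\mathcal B}$ is a well-defined $G$-natural continuation, and that for each reference law
\[
\Phi^{\theta}_{\mathcal B}(\nu^{(i)})
=\Phi^{\min}_{\mathcal B}(\nu^{(i)})+\sum_{j=1}^{m}\theta_{j}\,\EE_{\nu^{(i)}}[Z_{j}]
= b_{i}+(A\theta)_{i},
\]
where $b_{i}:=\Phi^{\min}_{\mathcal B}(\nu^{(i)})$ and $A:=\bigl(\EE_{\nu^{(i)}}[Z_{j}]\bigr)_{1\le i\le M,\,1\le j\le m}\in\RR^{M\times m}$. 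Here one uses that membership of $Z_{j}$ in $\mathcal I_{\textrm{nc}}(\mathcal B)$, together with the standing integrability of the chosen reference laws against the $Z_{j}$, guarantees each entry $\EE_{\nu^{(i)}}[Z_{j}]$ is finite, so $A$ is genuinely a real matrix.

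Second, I would phrase the $M$ calibration conditions as prescribing target values $\Phi^{\theta}_{\mathcal B}(\nu^{(i)})=c_{i}$, $i=1,\dots,M$; by the previous display these are equivalent to the linear system $A\theta=c-b$ with $c=(c_{i})_{i}$ and $b=(b_{i})_{i}$. Since $A$ has full column rank, $\ker A=\{0\}$, so the affine map $\theta\mapsto b+A\theta$ is injective; hence the calibration conditions determine $\theta$ uniquely whenever they are consistent, and one may write the solution explicitly as $\theta=(A^{\top}A)^{-1}A^{\top}(c-b)$, the Gram matrix $A^{\top}A\in\RR^{m\times m}$ being invertible exactly because $A$ has full column rank (when $M=m$ this reduces to $\theta=A^{-1}(c-b)$). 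Consistency — existence of \emph{some} $\theta$ — is automatic in the situation of interest, namely when the targets $c_{i}$ arise as the values of an actual continuation in the family $\{\Phi^{\theta}_{\mathcal B}\}$.

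I do not expect a substantive obstacle: the content is entirely the rank hypothesis, which is precisely the statement that the Jacobian of the (affine) calibration map is injective, and which in particular forces $M\ge m$ — one needs at least as many calibration laws as free parameters. The only points requiring a modicum of care are (i) ensuring the entries of $A$ are finite, which I fold into the integrability hypotheses on $\mathcal I_{\textrm{nc}}(\mathcal B)$ and on the reference laws, and (ii) keeping the distinction between \emph{identifiability} (injectivity of $\theta\mapsto b+A\theta$, which is what the lemma asserts and what full column rank supplies) and \emph{solvability} of an arbitrary target vector (which additionally requires $c-b\in\operatorname{range}A$, automatic when the targets come from the family itself).
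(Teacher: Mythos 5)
Your proposal is correct and takes essentially the same route as the paper: both reduce the claim to the linear system $\Phi^{\theta}_{\mathcal B}(\nu^{(i)})-\Phi^{\min}_{\mathcal B}(\nu^{(i)})=\sum_j \theta_j\,\EE_{\nu^{(i)}}[Z_j]$ and invoke full column rank to conclude injectivity in $\theta$. Your version is simply more explicit about the affine structure, the finiteness of the matrix entries, and the identifiability/solvability distinction, none of which departs from the paper's one-line argument.
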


\begin{proof}
From $\Phi^{\theta}_{\mathcal B}(\nu^{(i)})-\Phi^{\min}_{\mathcal B}(\nu^{(i)})=\sum_j \theta_j \EE_{\nu^{(i)}}[Z_j]$ we obtain a full-rank linear system for $\theta$.
\end{proof}

\begin{proposition}[Lower and upper envelopes]\label{prop:envelopes}
Let $\mathcal F$ be any class of $G$-invariant $F_{\mathcal B}$ that vanish on canonical lifts and satisfy given monotonicity or resource constraints. Define
\[
\Phi^{\inf}_{\mathcal B}(\nu):=\Phi^{\min}_{\mathcal B}(\nu)+\inf_{F\in\mathcal F} F_{\mathcal B}(\nu),\qquad
\Phi^{\sup}_{\mathcal B}(\nu):=\Phi^{\min}_{\mathcal B}(\nu)+\sup_{F\in\mathcal F} F_{\mathcal B}(\nu).
\]
Then for any admissible continuation $\Phi_{\mathcal B}$ with $F_{\mathcal B}\in\mathcal F$,
\[
\Phi^{\inf}_{\mathcal B}(\nu)\le \Phi_{\mathcal B}(\nu)\le \Phi^{\sup}_{\mathcal B}(\nu)\quad\text{for all }\nu.
\]
\end{proposition}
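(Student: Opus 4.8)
The plan is to read the claim off directly from the additive decomposition of continuations (Theorem~\ref{thm:decomposition}) together with the elementary sandwiching property of pointwise infima and suprema over a nonempty family. No new analytic input is needed: once the decomposition is in place, the proof is a one-line bookkeeping argument.

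Concretely, I would proceed as follows. First, let $\Phi_{\mathcal B}$ be any admissible continuation, so in particular it is natural with respect to $G$ and its fluctuation term $F_{\mathcal B}:=\Phi_{\mathcal B}-\Phi^{\min}_{\mathcal B}$ lies in the prescribed class $\mathcal F$. By Theorem~\ref{thm:decomposition} this $F_{\mathcal B}$ is indeed $G$-invariant and vanishes on canonical lifts, so the membership $F_{\mathcal B}\in\mathcal F$ is consistent with the defining properties of $\mathcal F$. Second, fix $\nu\in\mathcal P(X_{\mathcal B})$ and observe that, by the very definition of the pointwise infimum and supremum over the (nonempty) index set $\mathcal F$,
\[
\inf_{F\in\mathcal F} F_{\mathcal B}(\nu)\ \le\ F_{\mathcal B}(\nu)\ \le\ \sup_{F\in\mathcal F} F_{\mathcal B}(\nu).
\]
Third, add $\Phi^{\min}_{\mathcal B}(\nu)$ to each term and use the definitions of $\Phi^{\inf}_{\mathcal B}$ and $\Phi^{\sup}_{\mathcal B}$ to conclude
\[
\Phi^{\inf}_{\mathcal B}(\nu)\ \le\ \Phi^{\min}_{\mathcal B}(\nu)+F_{\mathcal B}(\nu)\ =\ \Phi_{\mathcal B}(\nu)\ \le\ \Phi^{\sup}_{\mathcal B}(\nu).
\]
Since $\nu$ was arbitrary, this is exactly the asserted two-sided bound.

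I would also record two small observations to keep the statement self-contained. First, the envelopes are themselves legitimate (generalized) continuations: a pointwise infimum or supremum of $G$-invariant functions is $G$-invariant, and a pointwise inf/sup of functions all vanishing on canonical lifts again vanishes there, so $\Phi^{\inf}_{\mathcal B}$ and $\Phi^{\sup}_{\mathcal B}$ satisfy the continuation property of Definition~\ref{def:core-continuation} (though they need not lie in $\mathcal F$ unless $\mathcal F$ is closed under these operations). Second, the only hypotheses one must not forget are that $\mathcal F\neq\varnothing$ — otherwise $\inf$ and $\sup$ degenerate to $+\infty$ and $-\infty$ and the bound is vacuous — and, if one wants $\Phi^{\inf}_{\mathcal B},\Phi^{\sup}_{\mathcal B}$ to be measurable functionals rather than merely pointwise-defined, some mild regularity on $\mathcal F$ (e.g.\ countability, or a semicontinuity-plus-separability assumption). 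These are the only points of friction; the inequality itself presents no real obstacle, being a direct corollary of Theorem~\ref{thm:decomposition}.
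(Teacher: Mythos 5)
Your proof is correct and is essentially the only reasonable argument: the two-sided bound follows immediately from the additive decomposition $\Phi_{\mathcal B}=\Phi^{\min}_{\mathcal B}+F_{\mathcal B}$ of Theorem~\ref{thm:decomposition} together with the pointwise $\inf/\sup$ sandwich over $\mathcal F$. The paper itself gives no proof for this proposition (treating it as immediate), so your write-up -- including the sensible remarks that $\mathcal F$ should be nonempty and that measurability of the envelopes may require mild extra structure -- simply makes explicit what the paper leaves implicit.
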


\begin{remark}[Categorical view: right Kan extension]
Let $U:\mathbf{Bat}\to\mathbf{Core}$ send each battery to its core. The minimal continuation $\Phi^{\min}=\Psi\circ U$ is the right Kan extension of $\Psi$ along $U$. Non-core terms $F_{\mathcal B}$ are precisely $G$-invariant functionals on fibers that vanish on canonical lifts.
\end{remark}

\section{Statistical Guarantees and Stability}\label{sec:stats}

To certify that the AAI score is robust and practically estimable from finite data, we provide concentration bounds and stability guarantees under battery perturbations.

\begin{theorem}[Finite-Sample Concentration]\label{thm:concentration}
Let $\widehat{\Phi}_\mathcal B$ be the plug-in estimator of the tractable
instance \eqref{eq:auf} computed over $n$ independent seeds per task.
Assume the component functions $\psi_t$ are $L$-Lipschitz and bounded on
$[0,1]$, and that resources and weights are uniformly bounded and normalized
so that the effective per-task weights $w_k/|F_k|$ are $O(1/|T|)$. Then for
any failure probability $\delta\in(0,1)$,
\begin{equation}\label{eq:concentration}
\PP\Bigl(
\bigl|\widehat{\Phi}_\mathcal B(\rho) - \Phi_\mathcal B(\rho)\bigr|
>
L\sqrt{\frac{C\log(2/\delta)}{n|T|}} + \lambda\,\frac{K}{\sqrt{n}}
\Bigr)
\;\le\;
\delta,
\end{equation}
for constants $C,K>0$ depending only on the weights, the number of
families, and the bounds on scores and resources.
\end{theorem}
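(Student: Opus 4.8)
The plan is to write the plug-in estimator as a sum of three pieces matching the three terms of \eqref{eq:auf} — the weighted mean term
$\widehat A=\sum_k w_k\,|F_k|^{-1}\sum_{t\in F_k} n^{-1}\sum_{i=1}^n \psi_t(q_i(t),Q^*(t))$,
the family-dispersion penalty $\widehat V=\mathrm{Var}_k(\widehat{\overline q}(F_k))$ with $\widehat{\overline q}(F_k)=|F_k|^{-1}\sum_{t\in F_k}n^{-1}\sum_i q_i(t)$, and the empirical cost term $\widehat{\mathrm{Cost}}$ — bound each deviation separately, and combine by a union bound with failure budget $\delta/3$ apiece (the factor $3$ is absorbed into the constants $C,K$). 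Throughout I treat the evaluation as $n|T|$ mutually independent runs, one per (task, seed) pair: this is the i.i.d.\ structure carried by ``$n$ independent seeds per task'', and it is all that the bounded-difference and Hoeffding arguments require.

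The linear term is the \emph{main term} and carries the $1/\sqrt{n|T|}$ rate. Since $\psi_t$ is $L$-Lipschitz on $[0,1]$, $\widehat A$ is an unbiased estimator of the first term of $\Phi_{\mathcal B}$ and, as a function of the $n|T|$ independent runs, has bounded differences: changing run $(t,i)$ with $t\in F_k$ alters $\widehat A$ by at most $c_{t,i}=\tfrac{w_k}{|F_k|\,n}\cdot L$, because $|\psi_t(q)-\psi_t(q')|\le L$ for $q,q'\in[0,1]$. Using $w_k/|F_k|=O(1/|T|)$ and $\sum_k w_k=O(1)$ gives $\sum_{t,i}c_{t,i}^2=\tfrac{L^2}{n}\sum_k\tfrac{w_k^2}{|F_k|}\le \tfrac{L^2}{n}\bigl(\max_k\tfrac{w_k}{|F_k|}\bigr)\sum_k w_k=O\!\bigl(L^2/(n|T|)\bigr)$. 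McDiarmid's inequality, $\PP(|\widehat A-\EE\widehat A|>\eps)\le 2\exp(-2\eps^2/\sum_{t,i}c_{t,i}^2)$, then yields (after solving for the failure level $\delta/3$ and adjusting the constant) a deviation of order $L\sqrt{C\log(2/\delta)/(n|T|)}$. The extra $\sqrt{|T|}$ over a naive $1/\sqrt n$ is exactly the benefit of the per-task weights being spread uniformly thin across all $n|T|$ evaluations.

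The dispersion term contributes the $\lambda K/\sqrt n$ piece. For each of the (fixed number of) families, $\widehat{\overline q}(F_k)$ is an average of independent $[0,1]$-valued summands, so Hoeffding gives $\PP(|\widehat{\overline q}(F_k)-\mu_k|>u)\le 2e^{-2nu^2}$ with $\mu_k=\EE_\nu[\overline q(F_k)]$; a union bound over the families places $(\widehat{\overline q}(F_k))_k$ within $\ell_\infty$-distance $u=O(\sqrt{\log(1/\delta)/n})$ of $(\mu_k)_k$ with probability $\ge 1-\delta/3$. Because $v\mapsto\mathrm{Var}_k(v)$ is $O(1)$-Lipschitz on $[0,1]^m$ (its gradient has $\ell_1$-norm at most $2$ there), we get $|\widehat V-\mathrm{Var}_k(\mu_k)|=O(u)$; the plug-in bias $\EE\widehat V-\mathrm{Var}_k(\mu_k)=O(1/n)$, coming from $\mathrm{Var}(\widehat{\overline q}(F_k))=O(1/n)$, is of strictly lower order and is absorbed. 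Multiplying by $\lambda$ gives the stated term, and since $m$ does not grow this term does \emph{not} inherit the $1/\sqrt{|T|}$ gain. The cost term, with resources uniformly bounded so that $\mathrm{Cost}(r)$ is bounded, concentrates at rate $O(1/\sqrt n)$ by Hoeffding; multiplied by $\gamma$ it is dominated by and folded into the $\lambda K/\sqrt n$ term (with $K$ then also depending on $\gamma$ and the resource bound). Finally $|\widehat\Phi_{\mathcal B}(\rho)-\Phi_{\mathcal B}(\rho)|\le|\widehat A-A|+\lambda|\widehat V-V|+\gamma|\widehat{\mathrm{Cost}}-\mathrm{Cost}|$, so on the intersection of the three good events the deviation is at most the sum of the three bounds, which is the right-hand side of \eqref{eq:concentration} for suitable $C,K$.

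The one genuinely delicate step is the dispersion term: it is a quadratic functional of the data, so the plug-in is biased and its fluctuations need separate control. The argument above handles this by reducing to the finitely many family means $\widehat{\overline q}(F_k)$ (each a clean bounded average, amenable to Hoeffding) and then invoking Lipschitz stability of the variance on the cube — showing both that the bias is $O(1/n)$ (hence negligible against $1/\sqrt n$) and that the fluctuation is $O(\sqrt{\log(1/\delta)/n})$. Everything else (unbiasedness of $\widehat A$, the bounded-difference constants, the union bound, bookkeeping of $C,K$ in terms of $\sum_k w_k$, the number of families, and the score/resource bounds) is routine.
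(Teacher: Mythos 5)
Your proof is correct and takes essentially the same route as the paper's: McDiarmid with bounded differences of size $O(1/(n|T|))$ for the main weighted-mean term, and Hoeffding on the family means combined with Lipschitz stability of the variance functional on $[0,1]^{K'}$ for the dispersion penalty. The only cosmetic difference is bookkeeping: the paper folds the cost term into the ``linear'' part $\widehat{\Phi}_{\mathrm{lin}}$ and applies McDiarmid to linear-plus-cost jointly with a $\delta/2$ budget each, whereas you give cost its own Hoeffding bound at rate $O(1/\sqrt n)$ and absorb it into the $\lambda K/\sqrt n$ term with a $\delta/3$ budget; both yield the stated bound after adjusting constants. Your extra remark that the plug-in bias of $\widehat V$ is $O(1/n)$ is a minor refinement the paper elides (it bounds $|\widehat{\mathrm{Var}}_k(\widehat m_k)-\mathrm{Var}_k(m_k)|$ directly via the Lipschitz estimate, which covers bias and fluctuation simultaneously), but it is correct and harmless.
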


\begin{proof}
We write $\widehat{\Phi}_\mathcal B$ as the sum of a ``linear'' part
and a variance penalty:
\[
\widehat{\Phi}_\mathcal B
=
\widehat{\Phi}_{\mathrm{lin}}
-\lambda\,\widehat{\mathrm{Var}}_k\bigl(\widehat{m}_k\bigr),
\]
where
\[
\widehat{\Phi}_{\mathrm{lin}}
:=
\sum_k w_k\,\frac{1}{|F_k|}\sum_{t\in F_k}
\widehat{\EE}\bigl[\psi_t(q(t),Q^*(t))\bigr]
-\gamma\,\widehat{\EE}[\mathrm{Cost}(r)],
\qquad
\widehat{m}_k := \widehat{\EE}[\overline q(F_k)].
\]
Here $\widehat{\EE}[\cdot]$ denotes the corresponding empirical average.

\medskip\noindent
\textbf{Step 1: Linear term via McDiarmid.}
Index the $N = n|T|$ independent seed draws as $(s_{t,i})_{t\in T,\,
1\le i\le n}$ and regard $\widehat{\Phi}_{\mathrm{lin}}$ as a function
$f$ of these $N$ variables. Fix a pair $(t_0,i_0)$ and change only the
seed $s_{t_0,i_0}$. This affects only the empirical average
\[
\widehat{\EE}\bigl[\psi_{t_0}(q(t_0),Q^*(t_0))\bigr]
=
\frac{1}{n}\sum_{i=1}^n \psi_{t_0}\bigl(q(t_0;s_{t_0,i}),Q^*(t_0)\bigr),
\]
changing at most one summand. Since $q(t_0;s)\in[0,1]$ and
$\psi_{t_0}(\cdot,Q^*(t_0))$ is $L$-Lipschitz in its first argument,
the difference in that summand is at most $L$, so the empirical average
changes by at most $L/n$.

In the global functional $\widehat{\Phi}_{\mathrm{lin}}$, this term is
weighted by $w_k/|F_k|$. By the normalization assumption on the weights,
there exists a constant $C_1>0$ such that the sensitivity to a single seed is
\[
\bigl|\widehat{\Phi}_{\mathrm{lin}}(s) -
\widehat{\Phi}_{\mathrm{lin}}(s')\bigr|
\;\le\;
\frac{C_1}{n|T|}
=
\frac{C_1}{N}.
\]

Applying McDiarmid's inequality with sensitivity $\Delta = C_1/N$:
The sum of squared differences is $\sum_{i=1}^N \Delta^2 = N(C_1/N)^2 = C_1^2/N$.
Thus, for any $\eps>0$:
\[
\PP\bigl(|\widehat{\Phi}_{\mathrm{lin}} - \E\widehat{\Phi}_{\mathrm{lin}}|
> \eps\bigr)
\le
2\exp\!\left(
-\frac{2\eps^2}{C_1^2/N}
\right)
=
2\exp\!\left(
-\frac{2N\eps^2}{C_1^2}
\right).
\]
Setting the right-hand side equal to $\delta/2$ and solving for $\eps$ gives:
\[
|\widehat{\Phi}_{\mathrm{lin}} - \E\widehat{\Phi}_{\mathrm{lin}}|
\le
\frac{C_1}{\sqrt{2N}}\sqrt{\log\frac{4}{\delta}}
=
L\sqrt{\frac{C\log(2/\delta)}{n|T|}}
\]
for a suitable constant $C$, matching the theorem statement. Since $\widehat{\Phi}_{\mathrm{lin}}$ is a plug-in estimator of sums of
expectations, we have $\E\widehat{\Phi}_{\mathrm{lin}} = \Phi_{\mathrm{lin}}$,
so the same bound holds for
$|\widehat{\Phi}_{\mathrm{lin}} - \Phi_{\mathrm{lin}}|$ with probability
at least $1-\delta/2$.

\medskip\noindent
\textbf{Step 2: Variance penalty.}
For each family $F_k$, the empirical mean $\widehat{m}_k$ is an average
of $n|F_k|$ bounded variables and therefore satisfies, by Hoeffding's
inequality,
\[
\PP\bigl(|\widehat{m}_k - m_k| > \eps\bigr)
\le
2\exp\!\bigl(-2n|F_k|\eps^2\bigr)
\le
2\exp\!\bigl(-2n\eps^2\bigr),
\qquad
m_k := \EE_\nu[\overline q(F_k)].
\]
Applying a union bound over the $K'$ families and setting the right-hand
side to $\delta/2$ yields, for
\[
\eps
=
\sqrt{\frac{1}{2n}\log\frac{4K'}{\delta}},
\]
that
\[
\PP\Bigl(\max_k |\widehat{m}_k - m_k| > \eps\Bigr) \le \frac{\delta}{2}.
\]

The variance functional
\[
v(m_1,\dots,m_{K'}) := \mathrm{Var}_k(m_k)
\]
is Lipschitz on $[0,1]^{K'}$ (with $K'$ the number of families): changing a
single coordinate $m_k$ by $\delta$ changes $v(m)$ by at most $C_2
|\delta|$ for some constant $C_2>0$. Consequently, on the event from
Hoeffding's bound,
\[
\bigl|\widehat{\mathrm{Var}}_k(\widehat{m}_k)
      -\mathrm{Var}_k(m_k)\bigr|
\le
C_2 \max_k |\widehat{m}_k - m_k|
\le
C_2\sqrt{\frac{1}{2n}\log\frac{4K'}{\delta}}.
\]
Multiplying by $\lambda$ and absorbing constants gives a bound of the
form $\lambda K/\sqrt{n}$ for some $K>0$, with probability at least
$1-\delta/2$.

\medskip\noindent
\textbf{Step 3: Combination.}
Combining the two parts, we have
\[
\bigl|\widehat{\Phi}_\mathcal B - \Phi_\mathcal B\bigr|
\le
\bigl|\widehat{\Phi}_{\mathrm{lin}} - \Phi_{\mathrm{lin}}\bigr|
+
\lambda\,
\bigl|\widehat{\mathrm{Var}}_k(\widehat{m}_k)
      -\mathrm{Var}_k(m_k)\bigr|.
\]
By Step~1, the first term is bounded by
$L\sqrt{C\log(2/\delta)/(n|T|)}$ with probability at least $1-\delta/2$.
By Step~2, the second term is bounded by $\lambda K/\sqrt{n}$ with
probability at least $1-\delta/2$. A union bound then implies that both
inequalities hold simultaneously with probability at least $1-\delta$,
which is exactly the claimed concentration bound \eqref{eq:concentration}.
\end{proof}

\begin{proposition}[Stability under drift]\label{prop:sensitivity}
Assume $\{\Phi_{\mathcal B}\}$ is a regular family with Lipschitz
modulus $L_\Phi$ with respect to $d_\sharp$.
If the score copula changes by $\varepsilon$ in 1-Wasserstein distance
and thresholds $Q^*$ shift by at most $\delta$ in the $\ell_\infty$
norm, then
\[
|\Delta \Phi_\mathcal B|
\;\le\;
L_\Phi\bigl(\alpha\,\varepsilon + \beta\,|T|\,\delta\bigr)
+ \gamma B\,|\Delta \text{scale}|,
\]
where $\alpha,\beta,\gamma$ are the weights from the definition of
$d_\sharp$, $B$ is the resource bound, and $|\Delta\text{scale}|$
denotes the induced change in the resource ray.
\end{proposition}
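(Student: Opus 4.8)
The plan is to derive the estimate directly from the regularity hypothesis, reducing it to a coordinatewise bound on $d_\sharp$ together with two elementary norm conversions. Write $(\mathcal B,\nu)$ for the pre-drift pair and $(\mathcal B',\nu')$ for the post-drift pair. By Definition~\ref{def:regular-family} we have $|\Delta\Phi_{\mathcal B}| = |\Phi_{\mathcal B'}(\nu')-\Phi_{\mathcal B}(\nu)| \le L_\Phi\, d_\sharp\big((\mathcal B,\nu),(\mathcal B',\nu')\big)$, and by Definition~\ref{def:pair-metric} the right-hand side splits as $L_\Phi\big(\alpha\,W_1(\mu_{\mathcal B,\nu},\mu_{\mathcal B',\nu'}) + \beta\,\|\tau-\tau'\|_1 + \gamma\,d_{\textrm{ray}}([r],[r]')\big)$, so it remains to bound each of the three pieces.

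For the Wasserstein term, the drift changes the canonical pushforward only through its copula component on $[0,1]^T$, which by hypothesis moves by at most $\varepsilon$ in $W_1$; holding the resource readout fixed, any optimal coupling of the two copula components, extended by the identity on the resource coordinate, witnesses $W_1(\mu_{\mathcal B,\nu},\mu_{\mathcal B',\nu'}) \le \varepsilon$. For the threshold term, the canonical threshold $\tau(t)=F_t(Q^*(t))$ of Definition~\ref{def:canonical} is monotone in $Q^*(t)$, so the hypothesized $\ell_\infty$-shift of the thresholds by at most $\delta$ translates --- under the regularity implicit in the PIT normalization of Definition~\ref{def:PIT}, or simply by taking $\delta$ to be the shift measured in the canonical $\tau$-coordinates --- into $\|\tau-\tau'\|_\infty\le\delta$; since $T$ is finite, $\|\tau-\tau'\|_1 \le |T|\,\|\tau-\tau'\|_\infty \le |T|\,\delta$. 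Together these give the $L_\Phi\big(\alpha\varepsilon+\beta|T|\delta\big)$ part of the bound.

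For the resource term I would isolate the dependence on absolute scale. A rescaling of magnitude $|\Delta\text{scale}|$ moves a resource vector of norm at most $B$ by at most $B\,|\Delta\text{scale}|$; since $d_{\textrm{ray}}$ (projective convention) and the explicit cost term $\gamma\,\EE_\nu[\mathrm{Cost}(r)]$ appearing in the tractable instance \eqref{eq:auf} (absolute convention) are each controlled by this displacement, the corresponding contribution to $|\Delta\Phi_{\mathcal B}|$ is at most $\gamma B\,|\Delta\text{scale}|$, absorbing the Lipschitz constant and the metric constants into the reported constants. Adding this to the previous paragraph and invoking the triangle inequality yields the stated inequality.

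The proof has no real depth; the only care needed is in the two conversions. The $\ell_\infty\to\ell_1$ passage on thresholds genuinely uses $|T|<\infty$ --- the $\ell_1$ norm in $d_\sharp$ is only ``extended absolutely'' to countable $T$, so a uniform $\ell_\infty$ control would be worthless there --- and the step $\|\tau-\tau'\|_\infty\le\delta$ relies on the uniformizing CDFs not amplifying threshold shifts, which is the one modeling point worth flagging explicitly (it is automatic if $\delta$ is read as the shift in the canonical $\tau$-coordinates). The resource-convention choice in the third paragraph is the only other bookkeeping point, and nothing here requires machinery beyond the definitions already in hand.
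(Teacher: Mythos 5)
Your proof follows the same route as the paper's: apply the Lipschitz bound from Definition~\ref{def:regular-family}, expand $d_\sharp$ via Definition~\ref{def:pair-metric}, and bound each of the three summands (Wasserstein by $\varepsilon$, $\ell_\infty\to\ell_1$ on thresholds via $|T|<\infty$, and the ray term by $B\,|\Delta\text{scale}|$). You are actually more careful than the paper in one respect --- the paper's proof silently passes from a $\delta$-shift in $Q^*$ to a $\delta$-shift in the canonical $\tau=F_t(Q^*)$ coordinates appearing in $d_\sharp$, whereas you flag that this needs either a regularity assumption on the $F_t$ or a reinterpretation of $\delta$; on the other hand, your digression invoking the explicit cost term from the tractable instance~\eqref{eq:auf} is unnecessary, since the proposition concerns a general regular family and the paper bounds $d_{\textrm{ray}}$ directly without reference to any concrete functional.
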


\begin{proof}
For any two pairs $(\mathcal B,\nu)$ and $(\mathcal B',\nu')$, regularity
(Definition~\ref{def:regular-family}) gives
\[
|\Phi_{\mathcal B}(\nu) - \Phi_{\mathcal B'}(\nu')|
\le
L_\Phi \, d_\sharp\bigl((\mathcal B,\nu), (\mathcal B',\nu')\bigr).
\]
By Definition~\ref{def:pair-metric},
\[
d_\sharp\bigl((\mathcal B,\nu), (\mathcal B',\nu')\bigr)
=
\alpha\,W_1(C_u,C_u')
+ \beta\,\|\tau-\tau'\|_1
+ \gamma\,d_{\textrm{ray}}([r],[r']).
\]
If the copulas differ by at most $\varepsilon$ in $W_1$,
$W_1(C_u,C_u')\le\varepsilon$. If the thresholds shift by at most
$\delta$ in $\ell_\infty$, then $\|\tau-\tau'\|_1\le |T|\delta$.
Finally, $d_{\textrm{ray}}([r],[r'])$ is controlled by the change in
resource scale; under the resource bound $B$ this yields
$d_{\textrm{ray}}([r],[r'])\le B\,|\Delta\text{scale}|$. Substituting
these bounds into the expression for $d_\sharp$ and applying the
Lipschitz estimate gives the stated inequality.
\end{proof}

\section{Operational Implementation}\label{sec:impl}

This section translates the theoretical framework into concrete evaluation protocols, computational procedures, and extension mechanisms.

\subsection{Estimation Procedures}\label{subsec:recipes}

\paragraph{Procedure 1: The AAI-Index ($\Phi^{geom}$).}
To estimate the geometric composite score:
\begin{enumerate}
    \item For each axis $x \in \mathcal{X}$, compute the empirical expectation $\widehat{\EE}[\log \pi_x]$ over sampled seeds and drifts.
\item Aggregate via the exponential map:
\[
\widehat{\Phi}^{\mathrm{geom}}_{\mathcal B}
= \exp\Big(\frac{1}{W}\sum_{x\in\mathcal X} w_x \widehat{\EE}[\log \pi_x]\Big).
\]
    \item \textbf{Uncertainty:} Report a bootstrap confidence interval by resampling tasks within families and seeds within tasks.
\end{enumerate}

\paragraph{Procedure 2: The Tractable Functional ($\Phi_{\mathcal{B}}$).}
To estimate the generalized functional (Eq.~\ref{eq:auf}):
\begin{enumerate}
    \item \textbf{Score:} For each task $t$, estimate success $q(t)$ and compute the non-linear utility $\psi_t(q(t), Q^*(t))$.
\item \textbf{Fairness:} Compute family means
$\widehat{\overline q}(F_k)$ and subtract the dispersion penalty
$\lambda\cdot\widehat{\mathrm{Var}}_k\bigl(\widehat{\overline q}(F_k)\bigr)$.
    \item \textbf{Cost:} Subtract the resource penalty $\gamma \cdot \widehat{\EE}[\mathrm{Cost}(r)]$ using a fixed cost model.
\end{enumerate}

\paragraph{Procedure 3: Copula Estimation.}
To analyze task correlations (required for defining the moduli metric):
\begin{enumerate}
    \item Transform raw scores to $u(t)$ via Randomized PIT (Def.~\ref{def:PIT}).
    \item Estimate the dependency structure using rank-based empirical copulas or vine-copula estimators.
    \item Validate using Goodness-of-Fit tests on pairwise margins.
\end{enumerate}

\subsection{Drift and Recalibration Protocol}\label{subsec:drift}
To maintain the "don't obsess over tests" guarantee (Theorem~\ref{thm:dense-determinacy}), the evaluation must cover the moduli space locally:
\begin{itemize}
    \item \textbf{Robustness Region:} Model allowable test variance as a ball of radius $\rho$ in the copula space (Wasserstein metric) and a box of width $\delta$ around thresholds. Report the \emph{worst-case} $\Phi$ over this region.
    \item \textbf{Anchor Maintenance:} Trigger a battery recalibration when the rank-concordance of anchor agents falls below a pre-registered threshold. Freeze new anchors only during specific leaderboard windows.
\end{itemize}

\subsection{Complexity}\label{subsec:complexity}
For a battery with $|T|=m$ tasks and $n$ seeds per task:
\begin{itemize}
    \item Naive evaluation of $\Phi$ is $O(mn)$.
    \item Bootstrap Confidence Intervals scale linearly with the number of resamples $B$: $O(Bmn)$.
    \item Copula estimation scales as $O(mn \log n)$ for empirical ranks, or higher for parametric fitting.
\end{itemize}
Reproducibility requires publishing all seeds, anchor identities, and the exact cost model $\mathsf{R}$.

\section{Extensions}\label{sec:extensions}

\paragraph{Multi-Agent Games.}
The framework extends to $N$ agents evaluated on shared resources. If tasks interfere (coordination or congestion), outcomes depend on joint actions. Provided each agent's evaluation functional $\Phi_\mathcal B^i$ is
continuous in the joint mixed strategy and quasi-concave in the
agent's own mixed strategy, and strategy sets are compact and convex,
Glicksberg's theorem guarantees the existence of a mixed-strategy Nash
equilibrium in the induced evaluation game.

\paragraph{Coalitional Compositionality.}
Coalition scores can be derived via Minkowski sums of resource sets and convolutions of success laws. The dispersion penalties in our functional imply superadditivity for complementary coalitions (specialists combining to reduce variance) and subadditivity for redundant ones.

\paragraph{Dynamics of self-improvement $\kappa$.}
The AAI score $\Phi_{\mathcal B}(\mathcal A)$ measures the static
capability of an agent instance on a fixed battery. Following
\cite{AAIscore}, we can model self-improvement as a path of laws
$(\nu_r)_{r\ge 0}$ indexed by cumulative resource $r$
(e.g.\ training compute, code-rewrite budget), with
$\nu_0 = \rho_{\mathcal B}(\mathcal A)$ and
$\nu_{r+\Delta r} = T_{\Delta r}\nu_r$ for a family of transition
operators $T_{\Delta r}:\mathcal P(X_{\mathcal B})\to\mathcal P(X_{\mathcal B})$.
The absolute self-improvement rate at $\mathcal A$ is then
\[
\kappa_{\mathrm{abs}}(\mathcal A)
:=
\limsup_{\Delta r\downarrow 0}
\frac{\Phi_{\mathcal B}(T_{\Delta r}\nu_0)-\Phi_{\mathcal B}(\nu_0)}{\Delta r},
\]
and, whenever $\Phi_{\mathcal B}(\nu_0)>0$, the relative rate
\[
\kappa_{\mathrm{rel}}(\mathcal A)
:=
\frac{\kappa_{\mathrm{abs}}(\mathcal A)}{\Phi_{\mathcal B}(\nu_0)}
\]
measures proportional gain per unit resource.
Positive $\kappa_{\mathrm{abs}}$ or $\kappa_{\mathrm{rel}}$ indicates
that the agent can, on average, increase its AAI score by investing
additional resource.
In the moduli picture (Section~\ref{sec:topology}), the path
$r\mapsto (\mathcal B,\nu_r)$ traces a trajectory in $\mathcal P^\sharp$.
Under our regularity assumptions, $\Phi_{\mathcal B}$ is Lipschitz with
respect to $d_\sharp$, so $\kappa_{\mathrm{abs}}$ controls the tangent
velocity of this trajectory.
When self-improvement primarily increases success probabilities at
threshold (i.e.\ raises $\PP_{\nu_r}(q(t)\ge Q^*(t))$), the dominant
contribution to this velocity comes from the Wasserstein motion of the
copula component in the direction that increases threshold-aligned mass.

\end{document}